\documentclass{article}

\usepackage{amsmath,amssymb,graphicx,epstopdf,color}
\usepackage{url,algorithm,algorithmic}

\usepackage{amsthm,vmargin}

\setpapersize{A4}

\newcommand{\g}[1]{\boldsymbol{#1}}

\newcommand{\I}[1]{\mathbf{1}_{#1}}

\renewcommand{\H}[0]{\mathcal{H}} 
 
\newcommand{\X}[0]{\mathcal{X}} 
\newcommand{\Y}[0]{\mathcal{Y}} 
\newcommand{\Q}[0]{\mathcal{Q}} 
\newcommand{\QR}[0]{\mathbb{Q}} 

\renewcommand{\O}[0]{\mathcal{O}}

\renewcommand{\d}[0]{d} 
\newcommand{\sign}[0]{\mbox{sign}} 
\newtheorem{theorem}{Theorem}
\newtheorem{problem}{Problem}
\newtheorem{assumption}{Assumption}
\newtheorem{definition}{Definition}
\newtheorem{proposition}{Proposition}

\newtheorem{corollary}{Corollary}
\newenvironment{proofsketch}{{\em Proof sketch.}}{ \hfill\qed }

\newcommand{\argmax}{\operatornamewithlimits{argmax}}
\newcommand{\argmin}{\operatornamewithlimits{argmin}}

\sloppy

\begin{document}

\title{\bf On the complexity of \\ piecewise affine system identification} 
\author{Fabien Lauer\medskip\\\small Universit\'e de Lorraine, CNRS, LORIA, UMR 7503, F-54506 Vand\oe{}uvre-l\`es-Nancy, France}
\maketitle
\begin{abstract}
The paper provides results regarding the computational complexity of hybrid system identification. More precisely, we focus on the estimation of piecewise affine (PWA) maps from input-output data and analyze the complexity of computing a global minimizer of the error. Previous work showed that a global solution could be obtained for continuous PWA maps with a worst-case complexity exponential in the number of data. In this paper, we show how global optimality can be reached for a slightly more general class of possibly discontinuous PWA maps with a complexity only polynomial in the number of data, however with an exponential complexity with respect to the data dimension. This result is obtained via an analysis of the intrinsic classification subproblem of associating the data points to the different modes. In addition, we prove that the problem is NP-hard, and thus that the exponential complexity in the dimension is a natural expectation for any exact algorithm.
\end{abstract}

%%%%%%%%%%%%%%%%%%%%%%%%%%%%%%%%%%%%%%%%%%%%%%%%%%%%%%%%%%%%%%%%%%%%%%%%%%%%%%%
\section{Introduction}
\label{sec:intro}

Hybrid system identification aims at estimating a model of a system switching between different operating modes from input-output data. More precisely, most of the literature considers autoregressive with external input (ARX) models to cast the problem as a regression one \cite{Paoletti07}. Then, two cases can be distinguished: switching regression, where the system arbitrarily switches from one mode to another, and piecewise affine (PWA) regression, where the switches depend on the regressors. 
A number of methods with satisfactory performance in practice are now available for these problems \cite{Garulli12}. However, compared with linear system identification, a major weakness of these methods is their lack of guarantees. %, which highlights the difficulty of the problem. % involves the simultaneous classification of the data points into modes and the regression of a submodel for each mode, which most methods reformulate as a nonconvex optimization problem that cannot be solved to global optimiality.

For the particular case of noiseless data, the algebraic method \cite{Vidal03} provides a solution to switching regression with a small number of modes. However, the quality of the estimates quickly degrades with the increase of the noise level. A few sparsity-based methods \cite{Bako11,Maruta14} also offer guarantees in the noiseless case, but these are subject to a condition on both the data and the sought solution.
In the presence of noise, most methods consider the minimization of the error of the model over the data \cite{Paoletti07}. While this does not necessarily yields the best predictive model (due to issues like identifiability, persistence of excitation and access to a limited amount of data), obtaining statistical guarantees with such an approach has a long history in statistics and system identification \cite{Ljung99}. 
However, such results are not available for hybrid systems. This is probably due to the fact that minimizing the error of a hybrid model is a difficult nonconvex optimization problem involving the simultaneous classification of the data points into modes and the regression of a submodel for each mode. Thus, theoretical guarantees could only be obtained under the rather strong assumption that this problem has been  solved to global optimality and most of the literature \cite{Ferraritrecate03,Bemporad05,Juloski05b,Lauer11a,Lauer13a,Lauer14a} focuses on this issue with heuristics of various degrees of accuracy and computational efficiency. 
Many recent works \cite{Bako11,Ozay12,Ohlsson10,Ohlsson13,Lauer12a,Maruta14} try to avoid local minima by considering convex formulations, but these only yield optimality with respect to a relaxation of the original problem. %; and when conditions of equivalence can be obtained, these are often data-dependent, difficult to check and derived in a noiseless setting, as in~\cite{Bako11,Maruta14}. 
Global optimality in the presence of noise was only reached in \cite{Roll04} for a particular class of continuous PWA maps known as hinging-hyperplanes by reformulating the problem as a mixed-integer program solved by branch-and-bound techniques. However, such optimization problems are NP-hard \cite{Garey79} and branch-and-bound algorithms have a worst-case complexity exponential in the number of integer variables, here proportional to the number of data and the number of modes.

Inspired by related clustering problems, such as the minimization of the sum of squared distances between points and their group centers, we could minimize the hybrid model error by enumerating all possible classifications of the points. But the number of classifications is exponential in the number of data. Conversely, the other approach enumerating a sample of values for the real variables of the problem is exponential in the dimension and can only offer an approximate solution. 

Overall, the literature does not provide a method that can guarantee both the optimality and the computability of a global minimizer of the error, while the computational complexity of this problem remains unknown and cannot be deduced from the NP-hardness of classical clustering problems~\cite{Aloise09} 
(see~\cite{Garey79,Blondel00} for an introduction to computational complexity and its relevance to control theory).

%As a result, the literature does not provide a method that can guarantee both the optimality and the computability of a global minimizer of the error, while the computational complexity of this problem remains unknown (see~\cite{Garey79,Blondel00} for a general introduction to computational complexity and its relevance to control theory) as the results on the complexity of classical clustering problems \cite{XX} do not transfer to hybrid system identification.
%The present paper provides an analysis of this complexity for the case of PWA regression. % in the context of PWARX system identification. 
%Most approaches to PWA regression, such as the ones in \cite{Bemporad05,Juloski05b,Munz05}, focus on the switching regression problem  to obtain estimates of th e submodels, from which the classification of the data points can be estimated. This is clearly suboptimal as the PWA structure of the model is not taken into account by the core of the estimation procedure, which means that even the global solution to this approach might be far from the one we are interested in. 

%A seminal method that truly deals with PWA regression is the clustering-based technique of \cite{Ferraritrecate03}. But this approach can be difficult to apply due to its sensitivity to hyperparameters and does not come with guarantees regarding the optimality of the solution. 

%==================================================
\paragraph*{Contribution}
The paper provides two results regarding the computational complexity of PWA regression, and more precisely for the problem of finding a global minimizer of the error of a PWA model, formalized in Sect.~\ref{sec:problem}. First, we show in Sect.~\ref{sec:NPhard} that the problem is NP-hard. %, which partly explains the lack of methods with computability guarantees. 
Then, we show in Sect.~\ref{sec:exact} that, for any fixed dimension of the data, an exact solution can be computed in time polynomial in the number of data via an enumeration of all possible classifications.  
%The proposed algorithm is exact in the sense that a global minimizer is exactly recovered (in exact arithmetic) via an enumeration of all possible classifications in contrast to, e.g., a grid search over the continuous variables of the problem which only provides an approximate solution in finite time. 
%Compared with the state-of-the-art MIQP approach of \cite{Roll04}, the algorithm %provides a computability guarantee and 
%also widens the class of models from hinging-hyperplanes to general PWA maps.
To obtain this result and avoid the exponential growth of the number of classifications with the number of data, %we analyze the classification subproblem and 
we show that, in PWA regression, the classification of the data points is highly constrained and the number of classifications to test can be limited.
The price to pay for this gain is an exponential complexity with respect to the data dimension and the number of modes. %Nonetheless, a large number of data in small dimension appears as a natural setting for system identification. 
%In addition, under the general assumption that the number of data must be larger than the dimension, the complexity of the MIQP approach is also exponential in the dimension.
Future work is outlined in Sect.~\ref{sec:discuss}.

%Note that, though the polynomial complexity of the algorithm makes it reasonable in principle, it does not directly imply that it can be efficiently applied in practice (for instance, even convex methods with a polynomial complexity such as the one in \cite{Ohlsson13} cannot be applied to large data sets). Therefore, we also propose an accelerated version of the algorithm. By using an upper bound on the error of the desired model, we show how a significant number of classifications can be spared during their enumeration in order to reduce the computing time by several orders of magnitude.  

%\paragraph*{Paper organization.}
%Section~\ref{sec:problem} starts by formulating the global optimization problem considered for PWA system identification. Then, we show that it is NP-hard in the dimension in Sect.~\ref{sec:NPhard}, while the exact algorithm that runs in time polynomial in the number of data is developed in Sect~\ref{sec:exact}. Section~\ref{sec:discuss} discusses open issues.

\paragraph*{Notations}
We use the indicator function $\I{E}$ of an event $E$ that is $1$ if the event occurs and $0$ otherwise. We define $\sign(u) = 1$ if $u\geq0$ and $-1$ otherwise. Given a set of labels $\Q\subset\mathbb{Z}$ and a  set of $N$ points, a labeling of these points is any $\g q \in\Q^N$. We use $j=\argmax_{k\in\Q} u(k)$ as a shorthand for $j=\min \{l \in \argmax_{k\in\Q} u(k)\}$. Given two sets, $\X$ and $\Y$, $\Y^{\X}$ is the set of functions from $\X$ to~$\Y$.

%%%%%%%%%%%%%%%%%%%%%%%%%%%%%%%%%%%%%%%%%%%%
\section{Problem formulation}
\label{sec:problem}
%%%%%%%%%%%%%%%%%%%%%%%%%%%%%%%%%%%%%%%%%%%

As in most works, we concentrate on discrete-time PWARX system identification considered as a PWA regression problem with regression vectors $\g x_i = [y_{i-1}, \dots,y_{i-n_y}, u_i, \dots, u_{i-n_u}]^T\in\X$ built from past inputs $u_{i-k}$ and outputs $y_{i-k}$. 
Since we are interested in computational complexity results, we restrain the data to rational, digitally representable, values and set $\X\subseteq \QR^d$. 
The outputs are assumed to be generated by a PWA system $f$ as 
$y_i = f(\g x_i) + v_i$,
where $v_i$ is a noise term. More precisely, PWA models can be expressed via a set of $n$ affine submodels and a function $h:\X\rightarrow  \Q=\{1,\dots,n\}$ determining the active submodel: $f(\g x) = \g w_{h(\g x) }^T\overline{\g x}$, where $\overline{\g x} = [\g x^T, 1]^T$. 

We call the function $h$ a classifier as it classifies the data points in the different modes.
Typically, PWA systems are defined with $h$ implementing a polyhedral partition of $\X$, with modes possibly spanning unions of polyhedra. However, in most of the literature on PWA system identification \cite{Paoletti07,Ferraritrecate03,Bemporad05,Juloski05b,Lauer12a}, $h$ is estimated within the family of linear classifiers
\begin{equation}\label{eq:hmulticlass}
	\H = \{ h \in \Q^\X : h(\g x) = \argmax_{k\in\Q} \g h_k^T\g x + b_k,\ \g h_k\in \QR^d, \ b_k\in\QR \} ,
\end{equation}
based on a set of $n$ linear functions and for which a mode spanning a union of polyhedra must be modeled as several modes with similar affine submodels. 
For PWA maps with $n=2$ modes, $h$ is a binary classifier for which it is common to consider its output in $\Q=\{-1,+1\}$ instead of $\{1,2\}$. Such a binary classifier can be obtained by taking the sign of a real-valued function. 
If this function is linear (or affine), then we obtain a linear classifier, which is equivalent to a separating hyperplane dividing the input space $\X$ in two half-spaces. In this case, the function class $\H$ can be defined as
\begin{equation}\label{eq:Hbinary}
	\H = \{ h \in \Q^\X\ :\ h(\g x) = \sign(\g h^T\g x + b),\ \g h\in \QR^d, \ b\in\QR \} 
\end{equation}
with a single set of parameters $(\g h,b)$ corresponding to the normal to the hyperplane and the offset from the origin. An equivalence with the multi-class formulation in~\eqref{eq:hmulticlass} is obtained by using $\g h = \g h_1 - \g h_2$ and $b=b_1-b_2$.

In this paper, we consider the common estimation approach of minimizing the error on $N$ data pairs $(\g x_i, y_i)\in\X\times \QR$, measured pointwise by a loss function $\ell : \QR\rightarrow \QR^+$ as
$$
	\ell(y_i- f(\g x_i) ) = \sum_{j\in\Q} \I{h(\g x_i) = j}\ \ell(y_i- \g w_j^T\overline{\g x}_i ) .
$$
More precisely, we focus on well-posed instances of the problem where $N$ is significantly larger than the dimension $d$ and the number of modes $n$ is given. Indeed, with free $n$ the problem is ill-posed as the solution is only defined up to a trade-off between the number of modes and the model accuracy. For the converse well-posed approach that minimizes $n$ for a given error bound, a complexity analysis can be found in~\cite{Alur14}.
Under these assumptions, the problem is as follows.
\begin{problem}[Error-minimizing PWA regression]\label{pb:PWA}
Given a data set $\{(\g x_i, y_i)\}_{i=1}^N \in( \X\times \QR)^N$ with $\X\subseteq\QR^d$ and an integer $n\in[2,N/(d+1)]$, find a global solution to %the least squares PWA regression problem is 
\begin{align}
	&\min_{ \g w\in\QR^{n(d+1)}, h\in\H }\ \frac{1}{N} \sum_{i=1}^N \sum_{j\in\Q} \I{h(\g x_i) = j}\ \ell(y_i- \g w_j^T\overline{\g x}_i ) ,
\end{align}
where $\g w = (\g w_j)_{j\in \Q}$ is the concatenation of all parameter vectors and $\H \subset \Q^\X$ is the set of $n$-category linear classifiers as in~\eqref{eq:hmulticlass} or~\eqref{eq:Hbinary}.
\end{problem}

The following analyzes the time complexity of Problem~\ref{pb:PWA} under the classical model of computation known as a Turing machine \cite{Garey79}. The time complexity of a problem is the lowest time complexity of an algorithm solving any instance of that problem, where the time complexity of an algorithm is the maximal number of steps occuring in the computation of the corresponding Turing machine program. The loss function $\ell$ is assumed to be computable in polynomial time throughout the paper.
% for any rational input. 

%Note that, even in the arguably favorable setting with $N>>d$ and fixed $n$, PWA system identification remains very complex, as witnessed by the nonconvex optimization formulation of Problem~\ref{pb:PWA}. 

%Note that, in practice, after obtaining a global solution to Problem~\ref{pb:PWA}, a complete identification procedure should re-estimate the classifier $h$ with a supervised learning method. %, such as multi-class support vector machines \cite{Geurmeur13,Lauer11b}. A discussion of this issue is out of the scope of the paper and we simply mention that well-known solutions exist, see, e.g., \cite{Hastie09}

%%%%%%%%%%%%%%%%%%%%%%%%%%%%%%%%%%%%%%%%%%%%%%%%%%%%
\section{NP-hardness}
\label{sec:NPhard}
%%%%%%%%%%%%%%%%%%%%%%%%%%%%%%%%%%%%%%%%%%%%%%%%%%%%

This section contains the proof of the following NP-hardness result, where an NP-hard problem is one that is at least as hard as any problem from the class NP of nondeterministic polynomial time decision problems~\cite{Garey79} (NP is the class of all decision problems for which a solution can be certified in polynomial time). 
\begin{theorem}\label{thm:NPhard}
With a loss function $\ell$ such that $\ell(e)=0 \Leftrightarrow e=0$, Problem~\ref{pb:PWA} is NP-hard.
\end{theorem}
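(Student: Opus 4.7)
The plan is to establish NP-hardness by a polynomial-time reduction from a classical NP-hard combinatorial problem (I have in mind \textsc{Partition} or \textsc{Subset-Sum}) to the decision version of Problem~\ref{pb:PWA}: \emph{does there exist a PWA model attaining zero error on the given data?} The assumption $\ell(e)=0 \Leftrightarrow e=0$ is the pivot of the argument, because it lets me replace the minimization by a purely combinatorial--geometric feasibility question, namely: \emph{can the $N$ data points be partitioned into $n$ subsets so that each subset lies exactly on a common affine hyperplane in $(\g x, y)$-space and the subsets are linearly separable in $\g x$-space by some $h\in\H$?} If this feasibility problem is NP-hard, so is the optimization in Problem~\ref{pb:PWA}.

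The first step is to formalize this equivalence. If the minimum is zero then $\ell(y_i - \g w_{h(\g x_i)}^T\overline{\g x}_i)=0$ for every $i$, hence $y_i = \g w_{h(\g x_i)}^T\overline{\g x}_i$, so the data points carrying a common label lie on a single affine hyperplane; moreover the labels are produced by a classifier in $\H$, which is exactly linear separability in $\g x$-space. The converse is immediate. The second step is the reduction itself, which I would carry out in the smallest nontrivial case $n=2$. Given integers $a_1,\dots,a_m\in\mathbb{Z}_{>0}$ with $\sum a_i = 2S$, I would build a data set by associating to each $a_i$ one (or a few) point(s) in $\QR^d\times\QR$ whose coordinates encode $a_i$, and by adding a small number of ``anchor'' points whose role is to pin down the two affine submodels and the separator. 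The construction is chosen so that a valid partition trivially yields two submodels plus a separating hyperplane fitting the data exactly.

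The main obstacle I foresee is the reverse direction: ruling out ``spurious'' zero-error PWA models that do not correspond to any valid partition of the $a_i$. Since the classifier $h$ is only required to be linear and not to obey any prescribed combinatorial structure, one must engineer the data so that any pair of subsets simultaneously lying on two affine hyperplanes and linearly separable in $\g x$-space is forced to encode a valid subset-sum. I plan to handle this by using anchor points that make the two affine submodels essentially unique (up to swapping labels), and by placing the integer-encoding points along auxiliary coordinate axes so that the orientation of the only admissible separating hyperplane pins the selected subset to sum exactly to $S$. The remaining verifications---that the reduction uses only rationals of polynomial bit-size and that the implication holds in both directions---should then be routine.
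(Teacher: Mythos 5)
Your overall strategy coincides with the paper's: reduce \textsc{Partition} to the zero-error decision version of Problem~\ref{pb:PWA} with $n=2$, using the hypothesis $\ell(e)=0\Leftrightarrow e=0$ to turn the optimization into the feasibility question ``can the points be split by a linear classifier into groups each lying on an affine graph?''. That equivalence, the choice of source problem, and even the ideas of coordinate-axis encoding and anchor points are all the right ones. The gap is that the proposal stops exactly where the proof begins: you explicitly defer the construction of the gadget and the entire reverse direction (``ruling out spurious zero-error models''), and that is not a routine verification --- it is the whole technical content of the theorem. As written, there is no specific data set, so neither implication can be checked, and a reader cannot tell whether your intended encoding actually forces every zero-error model to yield a valid partition.

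To see what is actually needed, here is the paper's gadget for an instance $s_1,\dots,s_d$: take $N=2d+3$ points in $\QR^d\times\QR$, namely $(s_i\g e_i,\,s_i)$ and $(-s_i\g e_i,\,s_i)$ for $i=1,\dots,d$, plus the anchors $(\g s,0)$, $(-\g s,0)$ and $(\g 0,0)$ with $\g s=\sum_k s_k\g e_k$. The crucial point your plan does not anticipate is that each integer must be encoded by \emph{two} points with the \emph{same} output $s_i$ at antipodal regressors $\pm s_i\g e_i$: if a linear classifier gave both the same label $j$, then (i) fitting both with one affine map forces the intercept $w_{j,d+1}=s_i\neq 0$, and (ii) by convexity the origin gets label $j$ too, so fitting $(\g 0,0)$ forces $w_{j,d+1}=0$ --- a contradiction. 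Hence the classifier must split every pair, which is precisely what converts the separating hyperplane into a subset selection $\hat I_1\subseteq\{1,\dots,d\}$ with $w_{j,i}=\pm 1$; the two anchors $(\pm\g s,0)$ then force $\sum_{i\in\hat I_1}s_i-\sum_{i\notin\hat I_1}s_i=0$, i.e.\ a valid partition. Without this pairing-plus-origin mechanism, a single point per integer admits many spurious zero-error fits (e.g.\ putting all points in one class fitted by one affine map), so your reduction as sketched would fail in the reverse direction. The forward direction and the polynomial bit-size of the construction are indeed routine once the gadget is fixed.
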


The proof uses a reduction from the partition problem, known to be NP-complete~\cite{Garey79}, i.e., a problem that is both NP-hard and in NP. 
\begin{problem}[Partition]\label{pb:partition}
Given a multiset (a set with possibly multiple instances of its elements) of $d$ positive integers, $S= \{s_1,\dots,s_d\}$, decide %whether $S$ %can be partitioned into two multisubsets $S_1$ and $S\setminus S_1$ such that the sum of the numbers in $S_1$ equals the sum of the numbers in $S_2$, i.e., 
whether there is a multisubset $S_1\subset S$ such that
$$
	\sum_{s_i \in S_1} s_i = \sum_{s_i\in S\setminus S_1} s_i .
$$
\end{problem}

More precisely, we will reduce Problem~\ref{pb:partition} to the decision form of Problem~\ref{pb:PWA}. 
\begin{problem}[Decision form of PWA regression]\label{pb:PWAdecision}
Given a data set $\{(\g x_i, y_i)\}_{i=1}^N \in( \X\times \QR)^N$, an integer $n\in[2,N/(d+1)]$ and a threshold $\epsilon\geq 0$, decide whether there is a pair $(\g w, h) \in\QR^{n(d+1)}\times \H$ such that
\begin{equation}\label{eq:decision}
	\frac{1}{N} \sum_{i=1}^N \sum_{j\in\Q} \I{h(\g x_i) = j}\ \ell(y_i- \g w_j^T\overline{\g x}_i ) \leq \epsilon ,
\end{equation}
where $\H$ is the set of linear classifiers as in~\eqref{eq:hmulticlass} or~\eqref{eq:Hbinary} and the loss function $\ell$ is such that $\ell(e)=0 \Leftrightarrow e=0$.
\end{problem}

\begin{proposition}
Problem~\ref{pb:PWAdecision} is NP-complete.
\end{proposition}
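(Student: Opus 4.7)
The proposition requires showing both NP-membership and NP-hardness. For membership, I would argue that a pair $(\g w,h)$ is a polynomial-size certificate: since $\X\subseteq\QR^d$ and zero-error (more generally, $\epsilon$-error) fitting imposes linear constraints in the rational parameters of $\g w$ and $h$, any feasible pair admits a representative whose rational entries come from solving a linear system built out of data coordinates, hence have bit-complexity polynomial in the input length. Verification of inequality~\eqref{eq:decision} for a guessed certificate then runs in polynomial time, since $\ell$ is assumed to be polynomial-time computable.

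For NP-hardness, I would reduce Partition (Problem~\ref{pb:partition}) to the decision problem with $n=2$ and $\epsilon=0$. Given $\{s_1,\dots,s_d\}$ with $T=\sum_i s_i$ (assumed even, else output a trivially infeasible instance in polynomial time), I construct a PWA regression instance in dimension $d$ whose data consists of pairs $P_i^{\pm}=(\pm\g e_i,\cdot)$, one pair per integer, together with a constant number of anchor points placed at positions such as $\g 0$ and $\pm\g 1$. The pairs are arranged so that the linear classifier's only real choice, for each index $i$, is whether $P_i^{+}$ and $P_i^{-}$ are split between the two modes or kept together, thereby encoding membership of $i$ in a candidate subset $S_1$. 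The anchors are chosen so that their exact fit by the assigned submodel, combined with the constraints induced by the pairs, forces the algebraic equality $\sum_{i\in S_1}s_i=T/2$.

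The forward direction of this reduction is direct: given a balanced $S_1\subset\{1,\dots,d\}$, I write explicit submodels $\g w_1,\g w_2$ with coordinates built from $\pm s_i$ according to $S_1$ and a classifier of the form $\sign\bigl(\sum_{i\in S_1}x_i-\sum_{i\in S_2}x_i\bigr)$, then verify that every data point is fit exactly. The backward direction is where the main difficulty lies: starting from any zero-error PWA model, one must use the pairwise fitting constraints to pin down each coordinate $w_{j,i}$ as an affine function of the biases $b_j$, then use the anchor equations to eliminate the biases and derive the equal-sum equation. The delicate step is to design the anchors (possibly placed symmetrically on both sides of the origin, and in sufficient number) so that enough independent equations remain after accounting for free biases and for every possible assignment of each anchor to either mode; without this, a spurious classifier configuration could satisfy the fitting constraints without encoding a balanced partition. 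Once that is established, the reduction is clearly polynomial in the bit-size of the Partition instance, yielding NP-hardness, and together with NP membership this gives NP-completeness.
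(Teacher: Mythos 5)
Your proposal follows essentially the same route as the paper: membership via polynomial-time verification of a certificate $(\g w,h)$, and hardness via a reduction from Partition with $n=2$, $\epsilon=0$, using one pair of points per integer placed symmetrically on the coordinate axes plus a constant number of anchors at the origin and at $\pm$(the sum vector). The forward direction you describe matches the paper's explicit construction. However, there is a genuine gap: the backward direction, which you yourself flag as ``where the main difficulty lies,'' is not actually carried out. You state that the anchors must be designed ``so that enough independent equations remain \dots; without this, a spurious classifier configuration could satisfy the fitting constraints without encoding a balanced partition. Once that is established\dots'' --- but establishing exactly that is the content of the proof, and your proposal stops short of it. In particular, your framing that the classifier's ``only real choice \dots is whether $P_i^{+}$ and $P_i^{-}$ are split between the two modes or kept together'' leaves open the kept-together configuration, which is precisely the spurious case that must be excluded.

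The missing idea, which the paper supplies, is a geometric one rather than a linear-algebraic counting of independent equations. If both points of a pair, say $s_i\g e_i$ and $-s_i\g e_i$, receive the same label $j$, then exact fitting by a single affine submodel forces the offset $w_{j,d+1}=s_i>0$. But a linear classifier assigns label $j$ to every point of the segment joining two same-labeled points, and this segment passes through the origin; the origin anchor $(\g 0,0)$ then forces $w_{j,d+1}=0$, a contradiction. Hence every pair is necessarily split across the two modes, each weight coordinate is pinned to $\pm1$ (or $\pm s_i$ in your scaling), and the anchors at $\pm\g s$ convert the exact fit into the balanced-sum equation --- with a short case analysis over which mode the origin and the $\pm\g s$ points land in. Without this segment-through-the-origin argument (or an equivalent device), the reduction's correctness is not established, so the proposal as written is a plan rather than a proof. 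Your treatment of NP-membership, on the other hand, is fine and in fact slightly more careful than the paper's about the bit-size of the certificate.
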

\begin{proof}
Since given a candidate pair $(\g w, h)$ the condition~\eqref{eq:decision} can be verified in polynomial time, Problem~\ref{pb:PWAdecision} is in NP. Then, the proof of its NP-completeness proceeds by showing that the Partition Problem~\ref{pb:partition} has an affirmative answer if and only if Problem~\ref{pb:PWAdecision} with $\epsilon=0$ has an affirmative answer. 

Given an instance of Problem~\ref{pb:partition}, let $N=2d+3$, $n=2$, $\Q=\{-1,1\}$ and build a data set with
$$
	(\g x_i,\ y_i) = \begin{cases}
		(s_i \g e_i,\ s_i) ,& \mbox{if } 1\leq i \leq d \\
		(-s_{i-d} \g e_{i-d},\ s_{i-d}) ,& \mbox{if } d< i \leq 2d \\
		(\g s,\ 0), & \mbox{if } i=2d+1\\
		(-\g s,\ 0), & \mbox{if } i=2d+2\\
		(\g 0,\ 0) , & \mbox{if } i=2d+3,

	\end{cases}
$$
where $\g e_k$ is the $k$th unit vector of the canonical basis for $\QR^d$ and $\g s = \sum_{k=1}^d s_k \g e_k$. 
If Problem~\ref{pb:partition} has an affirmative answer, then, using the notations of~\eqref{eq:Hbinary}, we can set
$$
	\g w_1 = \sum_{k\in I_1} \overline{\g e}_k - \sum_{k\in I_{-1}} \overline{\g e}_k,\ \g w_{-1} = -\g w_1,\quad
	\g h = \sum_{k\in I_1} \g e_k - \sum_{k\in I_{-1}} \g e_k,\ b=0,
$$
where $\overline{\g e}_k=[\g e_k^T,\ 0]^T$, $I_1$ is the set of indexes of the elements of $S$ in $S_1$ and $I_{-1} = \{1,\dots,d\}\setminus I_1$. % the set of indexes of the elements of $S$ not in $S_1$. 
This gives 
$$
	\g w_1^T\overline{\g x}_i = \begin{cases}
		s_i = y_i ,& \mbox{if } i\leq d \mbox{ and } i \in I_1 \\
		-s_i ,& \mbox{if } i\leq d \mbox{ and } i\in I_{-1} \\
		s_{i-d} = y_i ,& \mbox{if } i > d \mbox{ and } i-d \in I_{-1} \\		
		-s_{i-d} ,& \mbox{if } i > d \mbox{ and } i-d \in I_1 \\
		\sum_{k\in I_1} s_k - \sum_{k\in I_{-1}} s_k = 0=y_i, & \mbox{if } i=2d+1\\
		\sum_{k\in I_{-1}} s_k -\sum_{k\in I_1} s_k = 0=y_i, & \mbox{if } i=2d+2\\
		0  = y_i , & \mbox{if } i=2d+3
	\end{cases}
$$
and we can similarly show that 
$$
	\g w_{-1}^T \overline{\g x}_i  = y_i, \quad  \mbox{if } i \in I_{-1} \vee i-d \in I_1 \vee i > 2d,
$$
%$$
%	\g w_{-1}^T \overline{\g x}_i  = \begin{cases}
%		-s_i  ,& \mbox{if } i\leq d \mbox{ and } i\in I_1   \\
%		s_i = y_i ,& \mbox{if } i\leq d \mbox{ and } i\in I_{-1} \\
%		-s_{i-d}  ,& \mbox{if } i > d \mbox{ and } i-d \in I_{-1}  \\
%		s_{i-d} = y_i ,& \mbox{if } i > d \mbox{ and } i-d \in I_1 \\		
%		\sum_{k\in I_{-1}} s_k-\sum_{k\in I_1} s_k =0=y_i, & \mbox{if } i=2d+1\\
%		\sum_{k\in I_1} s_k - \sum_{k\in I_{-1}} s_k =0=y_i, & \mbox{if } i=2d+2\\
%		0  = y_i , & \mbox{if } i=2d+3		,
%	\end{cases}
%$$
while $\g h^T \g x_i$ is positive if $i\in I_1 \vee i-d \in I_{-1}$ and negative if $i\in I_{-1} \vee i-d \in I_1$. Therefore, for all points, $\g w_{h(\g x_i)}^T\overline{\g x}_i = y_i$, $i=1,\dots,2d+3$, and the cost function of Problem~\ref{pb:PWA} is zero, yielding an affirmative answer for Problem~\ref{pb:PWAdecision}. 

It remains to prove that if~\eqref{eq:decision} holds with $\epsilon=0$, then Problem~\ref{pb:partition} has an affirmative answer. To see this, note that due to $\ell$ being positive, a zero cost implies a zero loss for all data points. Thus, by $\ell(e)=0 \Leftrightarrow e=0$, if~\eqref{eq:decision} holds with $\epsilon=0$,
\begin{equation}\label{eq:perfectfit}
	\g w_{h(\g x_i)}^T\overline{\g x}_i  = y_i,\quad i=1,\dots,2d+3.
	%\g w_1^T\overline{\g x}_i  = y_i \quad \mbox{or}\quad  \g w_{-1}^T\overline{\g x}_i = y_i,\quad i=1,\dots,2d+3.
\end{equation}
Also note that if $h(\g x_i) = h(\g x_{i+d}) = 1$ for some $i\leq d$, we have $s_i w_{1,i} + w_{1,d+1}= -s_i w_{1,i}+ w_{1,d+1}= s_i$. This is only possible if $s_i=0$, which is not the case (otherwise we can simply remove $s_i$ without influencing the partition problem), or if $w_{1,d+1} = s_i$. The latter is impossible if $h(\g x_i) = h(\g x_{i+d})$ since $h$ is a linear classifier that must return the same category for all points on the line segment between $\g x_i$ and $\g x_{i+d}$, which includes the origin $\g x_{2d+3} = \g 0$ and thus would imply by~\eqref{eq:perfectfit} that $\g w_1^T \overline{\g x}_{2d+3} = w_{1,d+1}= y_{2d+3}=0$. As a consequence, $h(\g x_i) = h(\g x_{i+d}) = 1$ cannot hold, and since we can similarly show that $h(\g x_i) = h(\g x_{i+d}) = -1$ cannot hold, we have $h(\g x_i) \neq h(\g x_{i+d})$ for all $i\leq d$. Hence,~\eqref{eq:perfectfit} leads to
%\g w_1^T\overline{\g x}_i \neq y_i \  \Rightarrow \  h(\g x_i) = -1 \  \Rightarrow \  h(\g x_{d+i}) = 1 \  \Rightarrow \  \g w_{1} ^T\overline{\g x}_{d+i} = y_{d+i}
\begin{align}
	&\g w_1^T\overline{\g x}_i \neq y_i \quad  \Rightarrow \quad \g w_{1} ^T\overline{\g x}_{d+i} = y_{d+i},\quad i=1,\dots,d \label{eq:nphardimplication1}\\
	&\g w_{-1}^T\overline{\g x}_i \neq y_i \quad  \Rightarrow \quad \g w_{-1} ^T\overline{\g x}_{d+i} = y_{d+i},\quad i=1,\dots,d. \label{eq:nphardimplication2}
\end{align}
%$$
%	\g w_1^T\overline{\g x}_i   = y_i \quad \Leftrightarrow \quad  \g w_{-1} ^T\overline{\g x}_{d+i} = y_i,\quad i=1,\dots,d
%$$
%and
%$$
%	\g w_{-1}^T\overline{\g x}_i   = y_i \quad \Leftrightarrow \quad  \g w_1 ^T\overline{\g x}_{d+i} = y_i,\quad i=1,\dots,d.
%$$
Let $\hat{I}_1 = \{ i\in\{1,\dots, d\} :  \g w_1^T\overline{\g x}_i = y_i \}$ and $\hat{I}_{-1} =  \{1,\dots, d\} \setminus \hat{I}_1$. Then, if $h(\g x_{2d+3})= +1$, $w_{1,d+1}=0$ and for all $i\leq d$, $\g w_1^T\overline{\g x}_i = w_i s_i$. Therefore, for all $i\in \hat{I}_1$, $w_i = 1$, while for all $i\in \hat{I}_{-1}$, \eqref{eq:nphardimplication1} gives $\g w_{1} ^T\overline{\g x}_{d+i} = y_{d+i}$, i.e., $-w_i s_i = s_i$ and $w_i = -1$. This leads to
\begin{align*}
	&\g w_1^T \overline{\g x}_{2d+1} = \sum_{i\in \hat{I}_1  } s_i - \sum_{i\in \hat{I}_{-1}  } s_i = -\g w_1^T \overline{\g x}_{2d+2}. 
\end{align*}
Thus, if $\g w_1^T \overline{\g x}_{2d+1} = y_{2d+1} = 0$ or $\g w_1^T \overline{\g x}_{2d+2} = y_{2d+2} = 0$, %a partition in the sense of Problem~\ref{pb:partition} can be found by taking the sign of the $d$ first entries of $\g w_1$. 
a valid partition in the sense of Problem~\ref{pb:partition} is obtained with $S_1 = \{s_i\}_{i\in \hat{I}_1}$. 
In addition, if $\g w_1^T \overline{\g x}_{2d+1} \neq  0$ and $\g w_1^T \overline{\g x}_{2d+2} \neq 0$, then by~\eqref{eq:perfectfit}, $\g w_{-1}^T \overline{\g x}_{2d+1} = \g w_{-1}^T \overline{\g x}_{2d+2} = 0$, which by construction implies that $w_{-1,d+1} = 0$. In this case, we redefine $\hat{I}_{-1} = \{ i\in\{1,\dots, d\} :  \g w_{-1}^T\overline{\g x}_i = y_i \}$ and $\hat{I}_{1} =  \{1,\dots, d\} \setminus \hat{I}_{-1}$ to obtain $w_{-1,i}=1$ for all $i\in \hat{I}_{-1}$ and $w_{-1,i}=-1$ for all $i\in \hat{I}_1$, resulting also in a valid partition by the fact that $\g w_{-1}^T \overline{\g x}_{2d+2} = \sum_{i\in \hat{I}_1} s_i - \sum_{i\in \hat{I}_{-1}} s_i = 0$. 
Since a similar reasoning applies to the case $h(\g x_{2d+3}) = -1$ by symmetry (substituting $\g w_{-1}$ for $\g w_1$), a zero cost, i.e., \eqref{eq:decision} with $\epsilon=0$, always implies an affirmative answer to Problem~\ref{pb:partition}.
\end{proof}

\begin{proof}[Proof of Theorem~\ref{thm:NPhard}]
Since the decision form of Problem~\ref{pb:PWA} with $\ell(e)=0 \Leftrightarrow e=0$, i.e., Problem~\ref{pb:PWAdecision}, is NP-complete, Problem~\ref{pb:PWA} with such a loss function is NP-hard (solving Problem~\ref{pb:PWA} also yields the answer to Problem~\ref{pb:PWAdecision} and thus it is at least as hard as Problem~\ref{pb:PWAdecision}). 
\end{proof}

%%%%%%%%%%%%%%%%%%%%%%%%%%%%%%%%%%%%%%%%%%%%%%%%%%%%
\section{Polynomial complexity in the number of data}
\label{sec:exact}
%%%%%%%%%%%%%%%%%%%%%%%%%%%%%%%%%%%%%%%%%%%%%%%%%%%%

We now state the result regarding the polynomial complexity of Problem~\ref{pb:PWA} with respect to $N$ under the following assumptions, the first of which holds almost surely for randomly drawn data points, while the second one holds for instance for $\ell(e) = e^2$ with a linear time complexity $T(N) = \O(N)$ \cite{Golub13}. 
\begin{assumption}\label{ass:generalposition}
The points $\{\g x_i\}_{i=1}^N$ are in general position, i.e., no hyperplane of $\QR^d$ contains more than $d$ points.
\end{assumption}
\begin{assumption}\label{ass:subpb}
Given $\{(\g x_i, y_i)\}_{i=1}^N \in (\X\times\QR)^N$, the problem $\min_{\g v\in\QR^{d+1} } \sum_{i=1}^N \ell (y_i - \g v^T \overline{\g x}_i)$ has a polynomial time complexity $T(N)$ for any fixed integer $d\geq 1$.
\end{assumption}
%We now state the result regarding the polynomial complexity of Problem~\ref{pb:PWA} with respect to $N$ under the following assumptions. The first one concerns the distribution of the data points and holds almost surely for randomly drawn data points. The second one is on the time complexity $T(N)$ of solving a standard linear regression problem with a single and holds for instance for $\ell(e) = e^2$ with $T(N) = \O(N)$ \cite{Golub13}. 
%\begin{assumption}\label{ass:generalposition}
%The points $\{\g x_i\}_{i=1}^N$ are in general position, i.e., no hyperplane of $\QR^d$ contains more than $d$ points.
%\end{assumption}
%\begin{assumption}\label{ass:subpb}
%Given $\{(\g x_i, y_i)\}_{i=1}^N \in (\X\times\QR)^N$, the problem $\min_{\g v\in\QR^{d+1} } \sum_{i=1}^N \ell (y_i - \g v^T \overline{\g x}_i)$ has a polynomial time complexity $T(N)$ for any fixed integer $d\geq 1$.
%\end{assumption}
\begin{theorem}\label{thm:main}
For any fixed number of modes $n$ and dimension $d$, under Assumptions~\ref{ass:generalposition}--\ref{ass:subpb}, the time complexity of Problem~\ref{pb:PWA} is no more than polynomial in the number of data $N$ and in the order of $T(N)\O\left(N^{d n ( n -1) / 2} \right)$.
\end{theorem}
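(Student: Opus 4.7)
The plan is to decouple the continuous joint optimization over $(\g w, h)$ in Problem~\ref{pb:PWA} into an outer discrete enumeration over labelings of the data and an inner regression. For any fixed labeling $\g q \in \Q^N$ of the $N$ data points, the objective becomes $\sum_{j \in \Q} \sum_{i : q_i = j} \ell(y_i - \g w_j^T \overline{\g x}_i)$, which decouples into $n$ independent regression problems, one per mode. Each is an instance of the subproblem of Assumption~\ref{ass:subpb} (restricted to the data in one mode), and therefore can be solved in at most $T(N)$ time; since $n$ is fixed, the total inner cost per labeling is $\O(T(N))$. The proof then reduces to showing that the set of labelings achievable by some $h \in \H$ has cardinality $\O(N^{dn(n-1)/2})$ and can be enumerated within that budget.

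The core of the counting argument exploits the pairwise structure of $\H$. For the binary formulation~\eqref{eq:Hbinary}, a classical combinatorial-geometry result (Cover's function) bounds the number of dichotomies of $N$ points in general position (Assumption~\ref{ass:generalposition}) by an affine hyperplane in $\QR^d$ by $\O(N^d)$, matching the theorem when $n=2$. For the multi-class formulation~\eqref{eq:hmulticlass}, observe that $h(\g x) = \argmax_{k\in\Q} \g h_k^T \g x + b_k$ is fully determined by the signs of the $c = n(n-1)/2$ pairwise differences $(\g h_k - \g h_l)^T \g x + (b_k - b_l)$ for $k < l$. Each such difference is an affine function on $\X$, so its sign pattern on the $N$ data points takes one of at most $\O(N^d)$ distinct values. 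Since the multi-class labeling is a deterministic function of the $c$-tuple of pairwise signs, the number of achievable labelings on the $N$ points is at most $\O(N^d)^{c} = \O(N^{dn(n-1)/2})$.

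The algorithmic side turns this counting bound into an enumeration. I would iterate over all $\binom{N}{d}^{c}$ choices of a $d$-subset of data points for each of the $c$ pairwise difference hyperplanes (together with the $2^{c}$ orientations), which parametrizes each such hyperplane up to combinatorial equivalence under Assumption~\ref{ass:generalposition}. For each of the $\O(N^{dc})$ resulting configurations, I would check via a linear-programming feasibility test in $n(d+1)$ variables (polynomial in $N$) that it arises from a valid multi-class classifier in $\H$; if so, read off the induced labeling on the $N$ points in $\O(N)$ time and solve the $n$ regression subproblems in $\O(T(N))$ time. Taking the minimum cost over all feasible configurations yields a global minimizer of Problem~\ref{pb:PWA}, and the total running time is $T(N) \cdot \O(N^{dn(n-1)/2})$. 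The main obstacle is the multi-class counting step: one must carefully reduce the argmax classification to $c$ pairwise comparisons and argue that the parameter-sharing among the $c$ difference hyperplanes cannot \emph{increase} the number of joint sign patterns, so that the per-hyperplane bound $\O(N^d)$ can safely be multiplied $c$ times to yield the claimed exponent.
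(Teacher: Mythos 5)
Your proposal is correct and follows essentially the same route as the paper: reduce the problem to enumerating realizable labelings (with $n$ independent regressions per labeling, each solvable in $T(N)$ time by Assumption~\ref{ass:subpb}, as in Proposition~\ref{prop:pwals}), enumerate the binary dichotomies via hyperplanes through $d$-subsets of the data, and handle $n\geq 3$ by decomposing the argmax classifier into its $n(n-1)/2$ pairwise difference hyperplanes, multiplying the per-hyperplane count $\O(N^d)$ accordingly. The one step you assert without justification --- that every affine dichotomy of $S$ is realized, up to the labels of $d$ points, by a hyperplane passing through exactly $d$ points of $S$ --- is precisely the paper's Proposition~\ref{prop:hyperplane} (proved by translating and rotating the separating hyperplane onto data points under the general-position assumption); your additional LP feasibility filter in the multi-class enumeration is a prudent refinement that the paper's own argument leaves implicit.
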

%In computational complexity terms, the theorem provides a polynomial reduction of PWA regression to linear regression. Thus, if linear regression is in P, then so is PWA regression.

The proof of Theorem~\ref{thm:main} %contains two parts: one for the binary case $n=2$ with $\H$ in~\eqref{eq:Hbinary} (Proposition~\ref{prop:exact}) and another one for its extension to $n\geq 3$ with $\H$ in~\eqref{eq:hmulticlass} (Corollary~\ref{cor:multiclass}). For each part, we 
relies on the existence of exact algorithms with complexity polynomial in $N$ for the binary case ($n=2$, Proposition~\ref{prop:exact}) and the multi-class case ($n\geq 3$, Corollary~\ref{cor:multiclass}). %, where exact means that the solution is obtained with infinite precision (in exact arithmetic). % compared with continuous search methods for which the accuracy is finite in finite time. 
These algorithms are based on a reduction of Problem~\ref{pb:PWA} to a combinatorial search in two steps. The first step reduces the problem to a classification one. Indeed, Problem~\ref{pb:PWA} can be reformulated as the search for the classifier $h$, since by fixing $h$, the optimal parameter vectors $\{\g w_j\}_{j\in\Q}$ can be obtained by solving $n$ independent linear regression problems on the subsets of data resulting from the classification by $h$, which, by Assumption~\ref{ass:subpb}, can be performed in the polynomial time $T(N)$. This yields the following reformulation of the problem.

%relies is based on two intermediate results: the computation of the optimal parameter vectors $\g w_j$ for a fixed classifier $h$ in $\O\left(N\right)$ operations, detailed in Sect.~\ref{sec:optpar}, and the reduction of the estimation of $h$ to a combinatorial problem solved in $\O\left(N^{d n ( n -1) / 2} \right)$ operations, as discussed in Sect.~\ref{sec:optclassif}--\ref{sec:multiclass}.
%%Sect.~\ref{sec:globalpwa} and based on the analysis of binary linear classification problems in Sect.~\ref{sec:optclassif}. The extension to the case $n\geq 3$ is presented in Sect.~\ref{sec:multiclass}.

%%%%%%%%%%%%%%%%%%%%%%%%%%%%%%%%%%%%%%%%%%%%%%%%%%%%%%%
%\subsection{Computing the optimal parameter vectors}
%\label{sec:optpar}

%%We start by reformulating the PWA regression problem in such a way as to reduce it to a classification problem. The aim here is to express the optimal parameter vectors $\g w_j$ as functions of the classifier in order to limit the optimization with respect to the latter. 
%%Doing so, it is desirable to obtain closed-form formulas for the parameter vectors. %, but note that estimates computable in time polynomial in $N$ are sufficient regarding the polynomial complexity of the method to be proposed next. 

%By fixing $h$, Problem~\ref{pb:PWA} can be explicitly solved with respect to the parameter vectors via least squares estimates obtained on each group of data according to the classification by $h$. This yields the following reformulation of the problem.

\begin{proposition}\label{prop:pwals}
Problem~\ref{pb:PWA} is equivalent to
\begin{align}\label{eq:pwals}
&\min_{\g w\in\QR^{n(d+1)}, h\in\H }\ \frac{1}{N} \sum_{i=1}^N \sum_{j\in\Q} \I{h(\g x_i) = j}\ \ell\left(y_i- \g w_j^T\overline{\g x}_i\right) \\
&\mbox{s.t. }	\forall j\in\Q,\quad \g w_j \in \argmin_{\g v\in\QR^{d+1}} \ \sum_{i=1}^N \I{h(\g x_i) = j} \ell(y_i- \g v^T\overline{\g x}_i ).\nonumber
\end{align}
%where, for $j=1,\dots,n$,
%$$
%	\g w_j = \arg\min_{\g w_j\in\R^{d+1}} \ \frac{1}{N} \sum_{i=1}^N \I{h(\g x_i) = j} \ell(y_i- \g w_j^T\overline{\g x}_i ),
%$$
%in the sense that, if $h^*$ is a global solution to~\eqref{eq:pwals}, $(\g w^*, h^*)$ is a global solution to Problem~\ref{pb:PWA} with $\g w^* = [\g w_1(h)^T, \dots, \g w_n(h)^T]^T$.
%NOT if abnd only if: is pseudo-inverse via SVD rank defficient, then least square solution not unique.... so other opttima exist for Pb 3.
\end{proposition}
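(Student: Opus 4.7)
The plan is to exploit the fact that, once the classifier $h$ is fixed, the objective decomposes as a sum of $n$ independent subproblems, one per mode. This will let me match the optima of the two formulations directly.

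First, I would rewrite the objective of Problem~\ref{pb:PWA} for a fixed $h \in \H$ by interchanging the two sums,
\[
	\frac{1}{N} \sum_{i=1}^N \sum_{j\in\Q} \I{h(\g x_i) = j}\ \ell(y_i- \g w_j^T\overline{\g x}_i )
	= \frac{1}{N} \sum_{j \in \Q} \underbrace{\sum_{i=1}^N \I{h(\g x_i) = j}\ \ell(y_i- \g w_j^T\overline{\g x}_i )}_{=: \Phi_j(\g w_j;h)} ,
\]
and note that $\Phi_j(\g w_j;h)$ depends only on the block $\g w_j$. Hence, for any $h$ fixed, minimizing the objective jointly over $\g w = (\g w_j)_{j\in\Q}$ is equivalent to minimizing each $\Phi_j(\cdot\,;h)$ separately, i.e., to enforcing the constraint $\g w_j \in \argmin_{\g v \in \QR^{d+1}} \Phi_j(\g v;h)$ that appears in~\eqref{eq:pwals}.

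Next, I would derive the equivalence in both directions. For one direction, take any global minimizer $(\g w^\star, h^\star)$ of Problem~\ref{pb:PWA}; by the above decomposition, if some $\g w_j^\star$ were not a minimizer of $\Phi_j(\cdot\,;h^\star)$, replacing it by one would strictly decrease the objective, contradicting global optimality. Thus $(\g w^\star, h^\star)$ is feasible for~\eqref{eq:pwals} and achieves its objective, so the optimal value of~\eqref{eq:pwals} is no larger than that of Problem~\ref{pb:PWA}. For the other direction, the feasible set of~\eqref{eq:pwals} is a subset of that of Problem~\ref{pb:PWA} (both constrain $(\g w, h) \in \QR^{n(d+1)} \times \H$, with~\eqref{eq:pwals} imposing additional optimality constraints on each $\g w_j$), so its optimal value cannot be smaller. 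Combining the two inequalities shows that both problems share the same optimal value and the same set of optimal pairs $(\g w, h)$.

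There is essentially no obstacle here beyond checking that the argmin in the constraint is well-defined and that the decomposition goes through with the indicator weights; the entire argument is a straightforward separability observation. The only mildly delicate point is that $\argmin_{\g v\in\QR^{d+1}}\Phi_j(\cdot\,;h)$ may contain several elements (or, on empty modes, any $\g v$ is a minimizer), but since the value of $\Phi_j$ at any such element is the same, the objective in~\eqref{eq:pwals} is unambiguously equal to that of Problem~\ref{pb:PWA} at a joint minimizer, which is all that equivalence requires.
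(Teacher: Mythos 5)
Your argument is correct and is essentially the same as the paper's justification, which is given informally in the paragraph preceding the proposition: fixing $h$ decouples the objective into $n$ independent regression problems, one per block $\g w_j$, so the added argmin constraints do not change the optimal value or the optimal pairs $(\g w, h)$. Your write-up merely makes this separability observation explicit in both directions, including the harmless non-uniqueness of the argmin.
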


The second step reduces the estimation of $h$ to a combinatorial problem solved in $\O\left(N^{d n ( n -1) / 2} \right)$ operations, as detailed in Sect.~\ref{sec:optclassif}--\ref{sec:globalpwa} for $n=2$ and in Sect.~\ref{sec:multiclass} for $n\geq 3$.

%%%%%%%%%%%%%%%%%%%%%%%%%%%%%%%%%%%%%%%%%%%%%%%%%%%
\subsection{Finding the optimal classification}
\label{sec:optclassif}

We reduce the complexity of searching for the classifier by considering all possible linear classifications instead of all possible linear classifiers. In other words, we project the class $\H$ of classifiers onto the set of points $S=\{\g x_i\}_{i=1}^N$ to reduce a continuous search to a combinatorial problem. This is in line with the techniques used in statistical learning theory \cite{Vapnik98} for the different purpose of computing error bounds for infinite function classes. 
Thus, we introduce definitions from this field.

\begin{definition}[Projection onto a set] %of a function class on a set of points] 
The projection of a set of classifiers $\H \subset \Q^\X$ onto $S=\{\g x_i\}_{i=1}^N$, denoted $\H_S$, is the set of all labelings of $S$ that can be produced by a classifier in $\H$: 
$$
	\H_S = \{ (h(\g x_1),\dots, h(\g x_N) )\ : \ h\in\H \} \subseteq \Q^N .
$$
\end{definition}

\begin{definition}[Growth function] The {\em growth function} $\Pi_{\H}(N)$ of $\H$ at $N$ is the maximal number of labelings of $N$ points that can be produced by classifiers from $\H$:
$$
	\Pi_{\H}(N) = \sup_{S \in \X^N} |\H_S|.
$$
\end{definition}

We now focus on binary PWA maps and thus on binary classifiers with output in $\Q=\{-1,+1\}$. For such classifiers, we obviously have $\Pi_{\H}(N) \leq 2^N$ for all $N$. By further restricting $\H$ to affine classifiers as in~\eqref{eq:Hbinary}, results from statistical learning theory (see, e.g., \cite{Vapnik98}) provide the tighter bound
%But results from statistical learning theory also provide tighter bounds based on the Vapnik-Chervonenkis (VC) dimension. 
%
%\begin{definition}[VC-dimension] The {\em VC-dimension} of a set of binary classifiers $\H$ is the largest integer $N$ such that $\mathcal{N}_{\H}(N) = 2^N$. If this equation holds for all $N>0$, then the VC-dimension is infinite.
%\end{definition}
%
%\begin{lemma}[from \cite{Vapnik71,Sauer72,Shelah72}]\label{lemma:sauer}
%Let $\H$ be a class of binary classifiers of VC-dimension $d_{VC}$. Then, for any $N \geq 2d_{VC}$, 
%$$
%	\mathcal{N}_{\H}(N) \leq \sum_{k=0}^{d_{VC}} \begin{pmatrix}N\\k\end{pmatrix} \leq \left( \frac{e N}{d_{VC}}\right)^{d_{VC}} .
%$$
%\end{lemma}
%Given that the class of linear (actually, affine) classifiers \eqref{eq:Hbinary} has VC-dimension $d_{VC} = d+1$ \cite{Vapnik98}, Lemma~\ref{lemma:sauer} leads to
$
	\Pi_{\H}(N) \leq \left( \frac{e N}{d+1}\right)^{d+1} 
$, which is polynomial in $N$ and thus promising from the viewpoint of global optimization. However, its proof is not constructive and does not provide an explicit algorithm for enumerating all the labelings. 
%This upper bound on the number of labelings is polynomial in $N$ and thus promising from the viewpoint of global optimization. 
%However, the proof of Lemma~\ref{lemma:sauer} is not constructive and does not provide an explicit algorithm for enumerating all these labelings. 
The following theorem, though leading to a looser bound on the growth function, offers a constructive scheme to compute the projection $\H_S$, which is what we need in order to test all the labelings in $\H_S$ for global optimization. 
%Having a loosened bound with a constructive scheme might seem strange, but simply means that in the computation of the projection, several labelings can be enumerated more than once. Though we could avoid them simply by comparing to previous ones, this requires storing all labelings, which might not be possible. In the global optimization algorithm we propose in Sect.~\ref{sec:globalpwa}, testing multiple times the same labeling leading to the same value of the cost function appears more efficient. 

\begin{theorem}\label{thm:enum}
The growth function of the class of binary affine classifiers of~$\QR^d$, $\H$ in \eqref{eq:Hbinary}, is bounded for any $N>d$ by
$$
	\Pi_{\H}(N) \leq 2^{d+1} \begin{pmatrix}N\\d\end{pmatrix} = \O(N^d)
$$
and, for any set $S$ of $N$ points in general position, an algorithm builds the projection $\H_S$ in $\O(N^d)$ time.
\end{theorem}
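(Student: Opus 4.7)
The plan is to prove both the bound and the algorithm simultaneously by exhibiting, for each realizable labeling, a canonical witness: an affine classifier whose separating hyperplane passes through exactly $d$ of the data points. Fix a labeling $\g q\in\Q^N$ achieved by some $(\g h,b)\in\R^{d+1}$, and consider the open convex cone
$$P_{\g q}=\left\{(\g h',b')\in\R^{d+1}\ :\ q_i(\g h'^{T}\g x_i+b')>0,\ i=1,\dots,N\right\},$$
which is invariant under positive rescaling. Intersecting its closure with the unit sphere $S^{d}\subset\R^{d+1}$ produces a nonempty compact region whose facets lie on the $N$ constraint hyperplanes $\g h'^{T}\g x_i+b'=0$. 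Under Assumption~\ref{ass:generalposition}, no $d+1$ data points share a common hyperplane of $\R^d$, which forbids any single $(\g h',b')$ from activating more than $d$ of these constraints. Every vertex of the region is therefore cut out by exactly $d$ active point-constraints together with the sphere constraint, hence corresponds to a separating hyperplane in $\R^d$ passing through exactly $d$ of the $\g x_i$.

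I would then enumerate candidates by the triple (subset of $d$ boundary points, orientation, sign pattern on the boundary). For each of the $\binom{N}{d}$ subsets, the $d$ equalities define a one-dimensional subspace of $\R^{d+1}$, meeting $S^{d}$ at exactly two antipodal vertices (one per orientation). A small perturbation from such a vertex into $P_{\g q}$ assigns to the boundary points one of $2^d$ sign patterns, and general position makes the $d$ linear functionals $(\delta\g h,\delta b)\mapsto\g x_{i_k}^{T}\delta\g h+\delta b$ linearly independent, so every pattern is locally reachable. Since every labeling in $\H_S$ comes from at least one such triple, $|\H_S|\le 2^{d+1}\binom{N}{d}=\O(N^d)$. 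The algorithm mirrors this: loop over the $\binom{N}{d}$ $d$-subsets, solve a constant-size linear system for the unique hyperplane through them, sweep the remaining $N-d$ points once to classify them, then emit the $2^{d+1}$ variants obtained by flipping the orientation and by assigning each sign pattern to the boundary, yielding a total cost $\O(N^d)$ with $d$ held fixed.

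The main obstacle is the canonical-form argument of the first paragraph. One must combine three ingredients precisely: scaling invariance to compactify the unbounded cone via $S^d$, the general-position hypothesis both to rule out vertices carrying more than $d$ active point-constraints and to guarantee that all $2^d$ boundary sign patterns are achievable by local perturbation, and a check that the cell associated with any realizable labeling is non-degenerate on $S^d$ so that it actually possesses a vertex of the required form from which the original labeling is recovered. Once this is established, the counting and the algorithmic enumeration are essentially mechanical.
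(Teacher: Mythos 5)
Your proof is correct, but it reaches the key structural fact by a genuinely different route than the paper. The paper works in the primal space $\QR^d$: Proposition~\ref{prop:hyperplane} takes the given separating hyperplane and deforms it directly (translate to the nearest point, then rotate about the affine hull of the points already hit) until it contains exactly $d$ points of $S$ without relabeling the rest, and this is only argued as a proof sketch. You instead work in the parameter space $\R^{d+1}$: you form the open cone $P_{\g q}$ of parameters consistent with a labeling, use general position to show the associated closed polyhedral cone is pointed and that no nonzero parameter vector can activate more than $d$ point-constraints, and take an extreme ray as the canonical witness. From there the counting ($\binom{N}{d}$ subsets, $2$ orientations, $2^d$ boundary sign patterns) and the enumeration algorithm are identical to the paper's. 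Your dual argument is more standard and easier to make fully rigorous than the rotate-and-translate sketch, and your local-perturbation step even shows all $2^d$ boundary patterns are realizable, i.e.\ the enumeration emits exactly $\H_S$ rather than a superset --- something the paper does not claim. Two loose ends you flag should be discharged explicitly: (i) $P_{\g q}$ is nonempty for every realizable $\g q$ even though the paper's convention $\sign(0)=+1$ allows the realizing hyperplane to contain data points --- a small increase of $b$ turns those into strict inequalities without flipping any other label; and (ii) pointedness of the cone follows because a nonzero $(\g h',b')$ in the lineality space would place all $N>d$ points on one hyperplane (if $\g h'\neq\g 0$) or force $b'=0$ (if $\g h'=\g 0$), both excluded by Assumption~\ref{ass:generalposition}. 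With those two observations written out, your argument is complete.
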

%\begin{theorem}\label{thm:enum}
%The growth function of the class of binary affine classifiers of~$\QR^d$, $\H$ in \eqref{eq:Hbinary}, is bounded for any $N>d$ by
%$$
%	\mathcal{N}_{\H}(N) \leq 2^{d+1} \begin{pmatrix}N\\d\end{pmatrix}

%$$
%and, for any set $S$ of $N$ points in general position, Algorithm~\ref{alg:enum} builds the projection $\H_S$ in $\begin{pmatrix}N\\d\end{pmatrix}$ outer iterations.
%\end{theorem}

%\begin{algorithm}
%\caption{Building $\H_S$ with binary classifiers~\eqref{eq:Hbinary}.\label{alg:enum}}
%\begin{algorithmic}
%\REQUIRE $S=\{\g x_i\}_{i=1}^N$.
%\STATE Initialize $\H_S\leftarrow \emptyset$.
%\FORALL{ $S_{h} \subset S$ such that $|S_h|=d$}
%	\STATE Compute the hyperplane of parameters $(\g h_{S_h}, b_{S_h})$ passing through the points in $S_h$.\footnotemark
%	\STATE Compute the labeling $\g q$ by classifying the data:
%	$$q_i = \sign( \g h_{S_h}^T\g x_i + b_{S_h}),\quad i=1,\dots,N .$$
%	\FORALL{ $\g l \in \{-1,+1\}^d$}
%		\STATE Set the entries of $\g q$ with index in $S_h$ to $\g q_{S_h} \leftarrow \g l$.
%		\STATE Update $\H_S \leftarrow \H_S \cup \{\g q, -\g q\}$.
%	\ENDFOR	
%\ENDFOR
%\RETURN $\H_S$.
%\end{algorithmic}
%\end{algorithm}

%Note that the inner iterations of Algorithm~\ref{alg:enum} do not include any significant computation.
The proof of Theorem~\ref{thm:enum} relies on the following proposition, which is illustrated by Fig.~\ref{fig:prophyperplane}.

\begin{figure}
\centering
\includegraphics[width=0.6\linewidth]{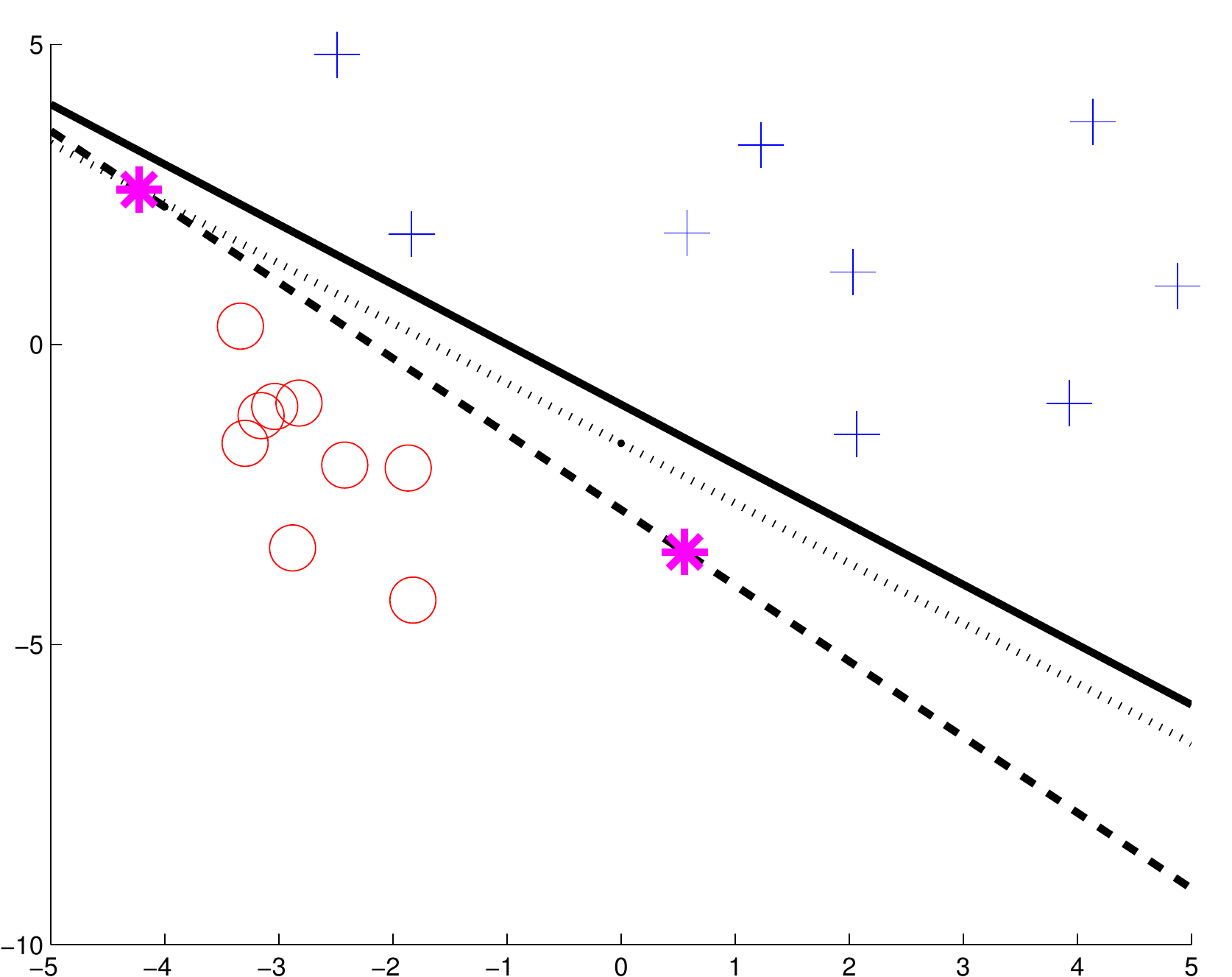}
\caption{The hyperplane $H$ (plain line) produces the same classification (into \textcolor{blue}{$\g +$} and \textcolor{red}{$\g\circ$})  as the hyperplane $H_S$ (dashed line) obtained by a translation (dotted line) and a rotation of $H$ such that it passes through exactly 2 points of $S$ %$\g x_1$ and $\g x_2$ 
(\textcolor{magenta}{$\g *$}). \label{fig:prophyperplane}}
\end{figure}

%%A standard technique to compute the hyperplane parameters $(\g h_{S_h}, b_{S_h})$ satisfying~\eqref{eq:computehb} is exhibited in Algorithm~\ref{alg:enum}, while explicit formulas can be obtained for dimensions $d\leq 3$ (see Appendix~\ref{sec:explicitformulas}). 
%We can now prove Theorem~\ref{thm:enum}.

%\begin{proof}[of Theorem~\ref{thm:enum}]
%For any labeling $\g q$ in $\H_S$, there is a classifier $h\in\H$ that produces this labeling. Applying Proposition~\ref{prop:hyperplane} to $h$, we obtain another classifier $h_{S_h}$ of parameters $(\g h_{S_h}, b_{S_h})$ that agrees with $h$ on $S\setminus S_h$ up to a global sign change. Let $\hat{\g q}\in\{-1,+1\}^N$ be defined by $\hat{q}_i = h_{S_h}(\g x_i)$, $i=1,\dots,N$. Then, from the two opposite labelings $\{\hat{\g q}, -\hat{\g q}\}$, we generate $2^{d+1}$ labelings by setting their entries $\hat{q}_{i}$ with $i\in S_h$ to all the $2^d$ combinations of signs (recall that $|S_h| = d$). By construction, there is no labeling of $S$ that agrees with $\g q$ on $S\setminus S_h$ other than these $2^{d+1}$ labelings and thus Algorithm~\ref{alg:enum} is guaranteed to recover $\g q$. 
%Since this holds for any $\g q\in\H_S$, the cardinality of $\H_S$ cannot be larger than $2^{d+1}$ times the number of subsets $S_h\subset S$ of cardinality $d$, i.e., $\mathcal{N}_{\H}(N) \leq 2^{d+1}\begin{pmatrix}N\\d\end{pmatrix}$. By enumerating all such subsets $S_h$ in its outer iterations, Algorithm~\ref{alg:enum} effectively builds all labelings in $\H_S$. 
%\end{proof}

\begin{proposition}\label{prop:hyperplane}
For any binary affine classifier $h$ in $\H$ \eqref{eq:Hbinary}  
%$$
%	h(\g x) = \sign( \g h^T\g x + b),\qquad \g h\in\R^d,\ b\in\R,
%$$
and any finite set of $N>d$ points $S = \{\g x_i\}_{i=1}^N$ in general position, there is a subset of points $S_h\subset S$ of cardinality $|S_h|=d$ and a separating hyperplane of parameters $(\g h_{S_h}, b_{S_h})$ passing through the points in $S_h$, i.e., 
\begin{equation}\label{eq:computehb}
	\forall \g x\in S_h,\quad \g h_{S_h}^T \g x + b_{S_h} = 0, \quad \mbox{with }\|\g h_{S_h}\| = 1, 
\end{equation}
which yields the same classification of $S$ in the sense that
\begin{align}\label{eq:equivclassif}
	\forall \g x_i \in S\setminus S_h,\quad h(\g x_i) = \sign( \g h_{S_h}^T \g x_i + b_{S_h}). 
%or\ &\ \forall \g x_i \in S\setminus S_h,\quad h(\g x_i) = -\sign( \g h_{S_h}^T \g x_i + b_{S_h}).
\end{align}
%(note that the points in $S_h$ lie exactly on the hyperplane and cannot be correctly classified in general).%; thus we need $N>>d$ in order to keep a sufficiently representative set of points outside of $S_h$).
\end{proposition}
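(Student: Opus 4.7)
The plan is to build the desired hyperplane incrementally by induction on the number $k$ of points of $S$ that lie on it, starting from the original separator of $h$ and pivoting until $k=d$. At each stage I maintain the invariant that the current parameters $(\g h_k,b_k)$ satisfy $\g h_k^T\g x+b_k=0$ for every $\g x$ in some subset $T_k\subseteq S$ of cardinality $k$, and $\sign(\g h_k^T\g x_i+b_k)=h(\g x_i)$ for every $\g x_i\in S\setminus T_k$. For the base case I take $T_0=\emptyset$ with $(\g h_0,b_0)=(\g h,b)$, or, if the initial hyperplane already contains $d$ points of $S$ (never more, by general position), I stop immediately. Once $k=d$ is reached, setting $S_h=T_d$ and $(\g h_{S_h},b_{S_h})=(\g h_d,b_d)/\|\g h_d\|$ delivers~\eqref{eq:computehb}--\eqref{eq:equivclassif}.

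For the inductive step with $k<d$, I consider the linear subspace $V\subset\QR^{d+1}$ of pairs $(\g h',b')$ satisfying the $k$ equations $(\g h')^T\g x+b'=0$ for $\g x\in T_k$. General position makes $T_k$ affinely independent, so $\dim V=d+1-k\geq 2$, which leaves room to pick $(\dot{\g h},\dot b)\in V$ not collinear with $(\g h_k,b_k)$. Along the affine path $(\g h_k(t),b_k(t))=(\g h_k+t\dot{\g h},b_k+t\dot b)$, the signed evaluations $d_i(t)=\g h_k(t)^T\g x_i+b_k(t)=d_i(0)+t\alpha_i$ with $\alpha_i=\dot{\g h}^T\g x_i+\dot b$ are affine in $t$, they stay zero on $T_k$, and they start nonzero on $S\setminus T_k$.

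The crux is to show that a sweep along this path must reach a new point of $S$ before any classification flips. If $\alpha_i=0$ held for every $\g x_i\in S$, then the nonzero $(\dot{\g h},\dot b)$ would satisfy the homogeneous system $(\g h')^T\g x_i+b'=0$, $i=1,\dots,N$, which by general position and $N>d$ admits only the trivial solution, a contradiction. Therefore some $\alpha_{i^\star}\neq 0$ for $\g x_{i^\star}\in S\setminus T_k$, and after possibly negating the direction the set $\{-d_i(0)/\alpha_i:\alpha_i\neq 0,\ i\notin T_k\}$ contains positive values. Taking $t^\star$ to be its minimum, the minimizing $\g x_{i^\star}$ lands on the hyperplane at $t^\star$, while by minimality every other $d_j$ has kept its sign on $[0,t^\star]$, so $T_{k+1}=T_k\cup\{\g x_{i^\star}\}$ restores the invariant. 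Non-collinearity of $(\dot{\g h},\dot b)$ with $(\g h_k,b_k)$ keeps the path away from the origin $(\g 0,0)$, and a generic choice of $(\dot{\g h},\dot b)\in V$ also prevents $\g h_k(t)$ alone from vanishing on $[0,t^\star]$, so the hyperplane stays well-defined.

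The main obstacle I expect is exactly this ``earliest-collision'' argument: simultaneously guaranteeing that (i) a valid sweep direction exists with some nonzero $\alpha_i$ off $T_k$, (ii) the stopping time $t^\star$ is positive and finite, and (iii) the parameter path does not degenerate. All three hinge on the general-position hypothesis, which rules out the algebraic coincidences preventing (i), together with the affineness of $d_i(t)$, which reduces (ii) to a minimum over a finite family of ratios, while (iii) is a matter of choosing the free direction within the at-least-two-dimensional space $V$. Iterating the step $d$ times starting from the base case produces a hyperplane through $d$ points of $S$ that agrees in sign with $h$ everywhere else, which is precisely what the proposition claims.
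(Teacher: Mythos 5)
Your proof follows essentially the same strategy as the paper's: the paper's ``translate to the closest point, then rotate about the affine hull of the captured points through a minimal angle'' is exactly your line sweep inside the subspace $V$ of parameters vanishing on $T_k$, and your first-collision argument (minimum of the positive ratios $-d_i(0)/\alpha_i$) is the parametric form of the paper's ``minimal angle''. The existence of a usable direction (some $\alpha_i\neq 0$ off $T_k$, by general position and $N>d$) is argued correctly, as is the non-degeneracy of the path. The one genuine loose end concerns points of $S$ that lie exactly on the current hyperplane without being in $T_k$. First, if the original separator of $h$ already passes through $p$ points with $0<p<d$, you start from $T_0=\emptyset$, so these points have $d_i(0)=0$; your premise that the $d_i$ ``start nonzero on $S\setminus T_k$'' fails, and with the paper's convention $\sign(0)=+1$ such a point is labeled $+1$ by $h$ but can be pushed to the strictly negative side for every $t>0$ by the sweep, violating~\eqref{eq:equivclassif} at the end if it never enters $S_h$. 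Second, the same problem recurs if several points reach zero simultaneously at $t^\star$ but you add only $\g x_{i^\star}$ to $T_{k+1}$. Both are repaired by the convention the paper's sketch implicitly uses: maintain $T_k$ as the set of \emph{all} points of $S$ lying on the current hyperplane (never more than $d$ of them, by general position), i.e., initialize $T$ with the points already on the original separator and absorb every point that lands on the hyperplane at $t^\star$.
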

\begin{proofsketch}
For all classifiers $h$ with separating hyperplanes passing through $d$ points of $S$, the statement is obvious. For the others passing through $p$ points with $0\leq p<d$, they can be transformed to pass through additional points without changing the classification of the remaining points. If $p=0$, it suffices to translate the hyperplane to the closest point. If $0<p<d$, the hyperplane can be rotated with a plane of rotation that leaves unchanged the subspace spanned by the $p$ points and a minimal angle yielding a rotated hyperplane passing through
$p^\prime > p$ points, where $p^\prime \leq d$ by the general position assumption. Iterating this scheme until $p=d$ 
%there are $d-p$ remaining degrees of freedom to translate and/or rotate the hyperplane while maintaining the contact with $p$ points and yielding the same classification. Each one of these degrees of freedom can be used to transform the hyperplane such that it passes through an additional point of $S$. 
%Processing in this manner until $p=d$ 
yields a hyperplane passing through the points in $S_h$ of parameters $(\g h_{S_h}, b_{S_h})$ satisfying~\eqref{eq:computehb} and~\eqref{eq:equivclassif}. 
\end{proofsketch}

We can now prove Theorem~\ref{thm:enum}.

\begin{proof}[Proof of Theorem~\ref{thm:enum}]
For any labeling $\g q$ in $\H_S$, there is a classifier $h\in\H$ that produces this labeling. Applying Proposition~\ref{prop:hyperplane} to $h$, we obtain another classifier $h_{S_h}$ of parameters $(\g h_{S_h}, b_{S_h})$ that passes through the points in $S_h$ and that agrees with $h$ on $S\setminus S_h$. Let $\hat{\g q}\in\{-1,+1\}^N$ be defined by $\hat{q}_i = h_{S_h}(\g x_i)$, $i=1,\dots,N$. Then, we generate $2^d$ labelings by setting its entries $\hat{q}_{i}$ with $i\in S_h$ to all the $2^d$ combinations of signs (recall that $|S_h| = d$). By construction, there is no labeling of $S$ that agrees with $\g q$ on $S\setminus S_h$ other than these $2^d$ labelings.
Since this holds for any $\g q\in\H_S$, the cardinality of $\H_S$ cannot be larger than $2^d$ times the number of hyperplanes passing through $d$ points of $S$. Since each subset $S_h\subset S$ of cardinality $d$ gives rise to two hyperplanes of opposite orientations, the number of such hyperplanes is $2\begin{pmatrix}N\\d\end{pmatrix}$ and we have $\Pi_{\H}(N) \leq 2^{d+1}\begin{pmatrix}N\\d\end{pmatrix} <  2^{d+1}\frac{N^d}{d!} = \O(N^d)$. 
In addition, there is an algorithm that enumerates all the subsets $S_h$ in $\begin{pmatrix}N\\d\end{pmatrix}$ iterations and builds $\H_S$ by computing a hyperplane passing through the points\footnote{The normal $\g h$ of a hyperplane $\{\g x : \g h^T \g x + b = 0\}$ passing through $d$ points $\{\g x_i\}_{i=1}^d$ in $\QR^d$ can be computed as a unit vector in the null space of  $[\g x_2-\g x_1, \dots,$ $\g x_d - \g x_1]^T$, while the offset is given by $b= -\g h^T\g x_i$ for any of the $\g x_i$'s.} in $S_h$ and the corresponding $2^{d+1}$ labelings at each iteration. Since these inner computations can be performed in constant time with respect to $N$, the algorithm has a time complexity in the order of $\begin{pmatrix}N\\d\end{pmatrix}=\O(N^d)$.
\end{proof}

%====================================================
\subsection{Global optimization of binary PWA models}
\label{sec:globalpwa}

We can use the results above to reduce the complexity of Problem~\ref{pb:PWA} in the binary case, considered in the following in its equivalent form \eqref{eq:pwals} from Proposition~\ref{prop:pwals}. First, note that the cost function in~\eqref{eq:pwals} only depends on $h$, since all feasible values of $\g w$ for a given $h$ yield the same cost. Furthermore, the cost 
does not depend on the exact value of $h$, but only on the resulting classification, i.e., on $h(\g x_i)$, $i=1,\dots,N$. Thus, given a global solution $h^*$ to \eqref{eq:pwals}, any classifier $h$ producing the same classification yields the same cost function value and hence is also a global solution. Thus, the problem reduces to the search for the correct classification $\g q \in \H_S$, whose complexity is in $\O(\Pi_{\H}(N))$ and bounded by Theorem~\ref{thm:enum}. 
In addition, for the purpose of binary PWA regression, opposite labelings $\g q$ and $-\g q$ are equivalent and can be pruned from $\H_S$. This is due to the symmetry of the cost function \eqref{eq:pwals}. %, also seen in \eqref{eq:minmin} where exchanging $\g w_1$ and $\g w_2$ does not influence the cost value. 
%Algorithm~\ref{alg:exact} is a variant of Algorithm~\ref{alg:enum} dedicated to PWA regression and that takes this symmetry into account. 
Algorithm~\ref{alg:exact} provides a solution to Problem~\ref{pb:PWA} for the binary case while taking this symmetry into account. 

\begin{algorithm}
\caption{Exact solution to Problem~\ref{pb:PWA} for $n=2$\label{alg:exact}}
\begin{algorithmic}
\REQUIRE A data set $\{(\g x_i,y_i)\}_{i=1}^N \subset (\QR^d\times \QR)^N$.
\STATE Initialize $S\leftarrow \{\g x_i\}_{i=1}^N$ and  $J^* \leftarrow +\infty$.
\FORALL{ $S_{h}\subset S$ such that $|S_h|=d$}
		\STATE Compute the parameters $(\g h_{S_h}, b_{S_h})$ of a hyperplane passing through the points in $S_h$.
		\STATE Classify the data points: $S_1 = \{\g x_i\in S\ : \ \g h_{S_h}^T \g x_i + b_{S_h} > 0\}$, $S_2 = \{\g x_i\in S\ : \ \g h_{S_h}^T \g x_i + b_{S_h} < 0\}$.
	\FORALL{ classification of $S_h$ into $S_h^1$ and $S_h^2$}
		\STATE Set %Compute the least squares estimates:
		$\displaystyle{
			\g w_j \in \argmin_{\g v\in\QR^{d+1}}  \sum_{\g x_i \in S_j\cup S_h^j}\!\!\ell(y_i- \g v^T\overline{\g x}_i ) ,\quad j=1,2,
		}$
		% $\g w_1$ and $\g w_2$ from the subsets $S_1\cup S_h^1$ and $S_2\cup S_h^2$.
		\STATE %Compute the cost function value 
		$\displaystyle{
		J = \frac{1}{N} \sum_{j=1}^2\ \sum_{\g x_i \in S_j\cup S_h^j} \ell(y_i- \g w_j^T\overline{\g x}_i ) ,
		}$
		\STATE and update the best solution $(J^*\leftarrow J, \g w^*\leftarrow [\g w_1^T, \g w_2^T]^T, \g h^* \leftarrow \g h_{S_h}, b^* \leftarrow b_{S_h} )$ if $J<J^*$. 
	\ENDFOR
\ENDFOR
%\STATE (Estimate $h^*$ from the labeled data by standard classification tools)
\RETURN $\g w^*, \g h^*, b^*$.
\end{algorithmic}
\end{algorithm}

\begin{proposition}\label{prop:exact}
Under Assumptions~\ref{ass:generalposition}--\ref{ass:subpb}, Algorithm~\ref{alg:exact} exactly solves Problem~\ref{pb:PWA} for $n=2$ and any fixed $d$ with a polynomial complexity in the order of $T(N)\O(N^d)$. %and exponential in the dimension $d$.
\end{proposition}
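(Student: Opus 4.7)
The plan is to establish correctness and complexity separately, leveraging Proposition~\ref{prop:pwals} and Theorem~\ref{thm:enum}.

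For correctness, I would first invoke Proposition~\ref{prop:pwals} to reduce Problem~\ref{pb:PWA} with $n=2$ to the search for a classifier $h\in\H$, since once $h$ (equivalently the induced labeling of $S$) is fixed, the optimal $\g w_1,\g w_2$ are recovered by the two independent linear regressions on the resulting subsets of points, which is exactly what the inner block of the algorithm computes. As noted above the algorithm listing, the cost in~\eqref{eq:pwals} depends on $h$ only through the labeling $\g q = (h(\g x_1),\dots,h(\g x_N))\in\H_S$, so the optimization reduces to a finite search over $\H_S$. By Theorem~\ref{thm:enum}, every labeling in $\H_S$ coincides on $S\setminus S_h$ with a labeling induced by a hyperplane $(\g h_{S_h},b_{S_h})$ passing through a subset $S_h\subset S$ of cardinality $d$, and the remaining ambiguity on the $d$ points of $S_h$ corresponds to the $2^d$ sign assignments to these points. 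The outer loop of Algorithm~\ref{alg:exact} iterates over all $\binom{N}{d}$ such subsets and the inner loop iterates over all $2^d$ classifications of $S_h$ into $S_h^1\cup S_h^2$; hence it visits every labeling in $\H_S$. Since the minimum of the cost over $\H_S$ is achieved, and $J^*$ tracks the minimum over all visited labelings, the returned solution $(\g w^*,\g h^*,b^*)$ is globally optimal.

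For complexity, I would count: the outer \textbf{for} loop runs $\binom{N}{d}=\O(N^d)$ times. At each iteration, the hyperplane parameters $(\g h_{S_h},b_{S_h})$ are obtained from a $(d{-}1)\times d$ null-space computation on the $d$ points of $S_h$, which by Assumption~\ref{ass:generalposition} is well-defined and takes constant time in $N$ (the footnote of Theorem~\ref{thm:enum}). Classifying the $N$ points via the signs of $\g h_{S_h}^T\g x_i + b_{S_h}$ takes $\O(N)$ time. The inner loop runs $2^d$ times, constant in $N$, and each iteration solves two linear regressions of the form in Assumption~\ref{ass:subpb}, each of cost $T(N)$, and evaluates the cost $J$ in $\O(N)$. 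The overall complexity is therefore $\O(N^d)\cdot(2^d\cdot(T(N)+\O(N))+\O(N))=T(N)\,\O(N^d)$ for any fixed $d$, which is polynomial in $N$ since $T(N)$ is polynomial by Assumption~\ref{ass:subpb}.

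The main subtlety I expect is to argue carefully that the enumeration of all $2^d$ classifications of $S_h$, combined with the fixed classification of $S\setminus S_h$ induced by the strict inequalities $\g h_{S_h}^T\g x_i+b_{S_h}\gtrless 0$, is sufficient to recover every labeling of $\H_S$ that Proposition~\ref{prop:hyperplane} associates with the hyperplane through $S_h$; this is exactly what the proof of Theorem~\ref{thm:enum} established, so it can be invoked directly. A secondary point worth mentioning is that Assumption~\ref{ass:generalposition} guarantees that no point outside $S_h$ lies on the constructed hyperplane, so the partition $S_1\cup S_2\cup S_h$ is well-defined, and the symmetry $\g q\sim -\g q$ of the cost merely means some labelings are visited twice, without affecting correctness or the polynomial bound.
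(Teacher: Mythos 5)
Your proposal is correct and follows essentially the same route as the paper: reduce to a search over labelings via Proposition~\ref{prop:pwals}, enumerate them by the constructive scheme of Theorem~\ref{thm:enum}, and count $2^d\binom{N}{d}$ iterations each costing $T(N)$. One minor slip: since the algorithm computes only one orientation of the hyperplane per subset $S_h$, it visits one representative of each pair $\{\g q,-\g q\}$ rather than all of $\H_S$ (so the symmetry of the cost is what makes the enumeration sufficient, not a source of duplicate visits), but your explicit appeal to that symmetry already covers this.
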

\begin{proof}
By following a similar path as for Theorem~\ref{thm:enum}, Algorithm~\ref{alg:exact} can be proved to test all linear classifications of the data points up to symmetric ones. Since Algorithm~\ref{alg:exact} computes a solution in terms of $\g w$ that is feasible for~\eqref{eq:pwals} for each of these classifications, the value of $J$ coincides with the cost function of~\eqref{eq:pwals} for a particular $h$. By the symmetry of this cost function with respect to $h$ and the fact that it only depends on $h$ via its values at the data points, Algorithm~\ref{alg:exact} computes all possible values of the cost function, including the exact global optimum of~\eqref{eq:pwals}, and returns a global minimizer. Thus, by Proposition~\ref{prop:pwals}, it also solves Problem~\ref{pb:PWA}.
The total number of iterations of Algorithm~\ref{alg:exact} is $2^d \begin{pmatrix}N\\d\end{pmatrix}= \O(N^d)$ 
and, under Assumption~\ref{ass:subpb}, these iterations only involve operations computed in polynomial time in the order of $T(N)$, hence the overall time complexity in the order of $T(N)\O(N^d)$.
%Thus, its complexity is polynomial in $N$ and exponential in $d$.
%plus the re-estimation of the classifier at the end. However, note that such a re-estimation is a classical step in all PWA regression methods which can be performed by standard pattern recognition tools and which is typically not taken into account when considering the complexity of the algorithm. In fact, when solving Problem 5 to global optimality, the precise value of $h$ does not matter as long as the correct classification is recovered. 
\end{proof}

\subsection{Multi-class extension}
\label{sec:multiclass}
%%%%%%%%%%%%%%%%%%%%%%%%%%%%%%%%%%%%%%%%

For $n>2$, the boundary between 2 modes $j$ and $k>j$ implemented by a linear classifier from $\H$ in \eqref{eq:hmulticlass} is a hyperplane of equation $h_{jk}(\g x) = h_j(\g x) - h_k(\g x) = 0$, i.e., based on the difference of the two functions $h_j(\g x) = \g h_j^T\g x + b_j$ and $h_k(\g x) = \g h_k^T\g x + b_k$. Based on these hyperplanes, the classification rule can be written as%\footnote{Note that given an arbitrary set of $n_H = n(n-1)/2$ hyperplanes with particular orientations, i.e., of functions $h_{jk}$, the output of the decision rule~\eqref{eq:decisionmulticlass} can be undetermined. This simply means that the set $\H$ is more constrained than a collection of $n_H$ hyperplanes and cannot realize all the classifications that can be generated by such a collection.}
\begin{align}\label{eq:decisionmulticlass}
 	h(\g x) &= \argmax_{k\in\mathcal{Q}} h_k(\g x)  
 	= j,\ \mbox{such that } \begin{cases} h_{jk}(\g x) \geq 0 ,\ \forall k> j,\\ h_{kj}(\g x) < 0,\ \forall k < j. \end{cases}	\nonumber
\end{align}
Based on these facts, we can build an algorithm to recover all possible classifications consistent with a linear classification in the sense of~\eqref{eq:hmulticlass}. %, as detailed in the proof of the following.

\begin{theorem}\label{thm:multiclass}
For the set of  multi-class linear classifiers  of~$\QR^d$, $\H$ in \eqref{eq:hmulticlass}, the growth function is bounded for any $N>d$ by
$$
	\Pi_{\H} ( N) \leq \left[2^{d+1}\begin{pmatrix}N\\d\end{pmatrix}\right]^{n(n-1)/2} = \O(N^{dn(n-1)/2})
$$
and, for any set $S$ of $N$ points of $\QR^d$ in general position, an algorithm builds $\H_S$ in $\O(N^{dn(n-1)/2})$ time. %$\left[2^{d+1}\begin{pmatrix}N\\d\end{pmatrix}\right]^{n_H} $ iterations.
%Algorithm~\ref{alg:multiclass} builds $\H_S$ in $\left[2^{d+1}\begin{pmatrix}N\\d\end{pmatrix}\right]^{n_H} $ iterations.
\end{theorem}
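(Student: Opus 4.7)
The plan is to reduce the multi-class case to the binary case of Theorem~\ref{thm:enum} by exploiting the $n(n-1)/2$ pairwise difference hyperplanes noted just before the theorem. I would start by observing that the multi-class rule $h(\g x) = \argmax_{k\in\Q}(\g h_k^T\g x + b_k)$ is fully determined by the pairwise signs $\sign(h_{jk}(\g x))$ for $j<k$, where $h_{jk} = h_j - h_k$: indeed, $h(\g x)=j$ holds exactly when $j$ dominates every other index in these pairwise comparisons (with the stated tie-breaking). This yields an injection from $\H_S$ into the Cartesian product, over the $n(n-1)/2$ pairs $(j,k)$ with $j<k$, of the binary projections induced on $S$ by the affine hyperplanes $\{h_{jk}=0\}$.

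The growth-function bound then follows immediately: each factor in this product is a set of binary labelings of $S$ achievable by an affine classifier from~\eqref{eq:Hbinary}, which by Theorem~\ref{thm:enum} has at most $2^{d+1}\binom{N}{d}$ elements, so
\[
	|\H_S| \leq \left[2^{d+1}\binom{N}{d}\right]^{n(n-1)/2} = \O(N^{dn(n-1)/2}).
\]
Taking the supremum over $S$ with $|S|=N$ gives the claimed bound on $\Pi_\H(N)$.

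For the constructive part, I would invoke the routine underlying Theorem~\ref{thm:enum} to enumerate, in $\O(N^d)$ time, the full list of binary labelings of $S$ achievable by affine classifiers, and then iterate over all $\O(N^{dn(n-1)/2})$ tuples consisting of one such labeling for each pair $(j,k)$ with $j<k$. Each tuple is interpreted as a candidate assignment of pairwise signs, from which a candidate multi-class labeling is reconstructed by selecting, at each $\g x_i$, the unique dominator index among the $n$ modes (or discarding the tuple if the pairwise signs at some $\g x_i$ form a cycle). Each reconstruction runs in $\O(Nn^2)$ time, a constant in $N$ for fixed $n$, so the overall complexity remains $\O(N^{dn(n-1)/2})$. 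The main obstacle, which I would flag in the write-up, is that pairwise hyperplanes chosen independently need not be realizable as the $n(n-1)/2$ differences of a common set of $n$ affine functions $(\g h_k, b_k)_{k=1}^n$; the algorithm therefore enumerates a superset of $\H_S$. Since $\H_S$ is contained in this superset by the injection above, this is sufficient for exhaustively solving~\eqref{eq:pwals}; recovering $\H_S$ exactly would require an additional linear feasibility check per candidate on the $n(d+1)$ parameters, which is constant in $N$ for fixed $n,d$ and does not alter the asymptotic complexity.
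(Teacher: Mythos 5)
Your proof takes essentially the same route as the paper: both decompose the multi-class rule into the $n(n-1)/2$ pairwise difference hyperplanes $h_{jk}=h_j-h_k$, bound $|\H_S|$ by the product of the binary projections (each controlled by Proposition~\ref{prop:hyperplane}/Theorem~\ref{thm:enum}), and enumerate tuples of binary labelings to build a superset of $\H_S$ --- indeed you are more explicit than the paper about the fact that independently chosen pairwise hyperplanes need not be jointly realizable as differences of $n$ common affine functions. One minor slip: your $\O(Nn^2)$ reconstruction per tuple is linear, not constant, in $N$, so the literal implementation costs $\O(N^{dn(n-1)/2+1})$ rather than the claimed $\O(N^{dn(n-1)/2})$ unless the enumeration is organized incrementally, as in the paper's recursive construction.
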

\begin{proof}
Any classification produced by a classifier from \eqref{eq:hmulticlass} can be computed from the signs of the $n_H=n(n-1)/2$ functions $h_{jk} = h_j - h_k$, $1\leq j<k\leq n$, corresponding to the pairwise separating hyperplanes. For any $S$, for each of these hyperplanes, Proposition~\ref{prop:hyperplane} provides an equivalent binary classifier which must be one from the $2\begin{pmatrix}N\\d\end{pmatrix}$ hyperplanes passing through $d$ points $S_{jk}$ of $S$. The number of sets of $n_H$ such hyperplanes is $2^{n_H}\begin{pmatrix}N\\d\end{pmatrix}^{n_H}$. 
%The number of multi-class classifiers that can be generated from a set of $n_H$ hyperplanes is bounded by  the number of orientations of these hyperplanes, i.e., $2^{n_H}$. 
Since these classifiers cannot produce all the $2^{n_H d}$ classifications of the $n_H d$ points in the sets $S_{jk}$, we must also take these into account so that the number of classifications of $S$ is upper bounded by $|\H_S|\leq 2^{n_H d}2^{n_H}\begin{pmatrix}N\\d\end{pmatrix}^{n_H} = \O(N^{dn_H})$. This upper bound holds for any $S$, and thus also applies to the growth function. Finally, an algorithm that makes explicit all the classifications mentioned above to build $\H_S$ can be constructed in a recursive manner, with one classification per iteration and thus with a similar number of iterations, each one including  computations performed in constant time.
\end{proof}

Theorem~\ref{thm:multiclass} implies the following for PWA regression.

\begin{corollary}\label{cor:multiclass}
Under Assumptions~\ref{ass:generalposition}--\ref{ass:subpb}, a global solution to Problem~\ref{pb:PWA} with $n\geq 3$ can be computed with a polynomial complexity in the order of $T(N)\O(N^{dn(n-1)/2})$. %, which is polynomial in the number of data $N$ and exponential in the dimension $d$ and the number of modes $n$.
\end{corollary}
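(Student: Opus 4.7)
The plan is to obtain Corollary~\ref{cor:multiclass} as a direct combination of Proposition~\ref{prop:pwals}, which reformulates Problem~\ref{pb:PWA} as a search over classifiers with the $\g w_j$ slaved to $h$ via $n$ independent regressions, and Theorem~\ref{thm:multiclass}, which bounds and constructively enumerates the projection $\H_S$. The argument mirrors the proof of Proposition~\ref{prop:exact} for the binary case, but with the multi-class enumeration from Theorem~\ref{thm:multiclass} replacing the binary one from Theorem~\ref{thm:enum}.

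Concretely, first I would observe, as in Sect.~\ref{sec:globalpwa}, that the cost function in the reformulation \eqref{eq:pwals} depends on $h$ only through the induced labeling $(h(\g x_1),\dots,h(\g x_N))$, because the $\g w_j$ are then determined as argmins of independent per-mode regression subproblems. Hence, the infimum of the cost over $h\in\H$ equals its minimum over the finite set $\H_S$ of labelings. Then I would invoke Theorem~\ref{thm:multiclass} to both bound $|\H_S|\leq \O(N^{dn(n-1)/2})$ and to exhibit an algorithm that enumerates every element of $\H_S$ in time $\O(N^{dn(n-1)/2})$, each element being produced in amortized constant time with respect to $N$.

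Next, I would describe the algorithm: iterate over all labelings $\g q\in\H_S$ produced by the enumeration of Theorem~\ref{thm:multiclass}; for each $\g q$, partition the data into the $n$ subsets $S_j=\{\g x_i : q_i = j\}$, solve $n$ independent linear regression subproblems $\g w_j \in \argmin_{\g v\in\QR^{d+1}}\sum_{\g x_i\in S_j}\ell(y_i-\g v^T\overline{\g x}_i)$, and evaluate the associated cost; keep the best. By Assumption~\ref{ass:subpb}, each regression is solved in time $T(N)$, so each iteration costs $n\,T(N)+\O(N)=\O(T(N))$ for fixed $n$. Across all $\O(N^{dn(n-1)/2})$ iterations this gives an overall complexity in the order of $T(N)\,\O(N^{dn(n-1)/2})$, as claimed.

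The only nontrivial point to justify is global optimality: by Proposition~\ref{prop:pwals}, any global solution $(\g w^\star, h^\star)$ of Problem~\ref{pb:PWA} induces a labeling $\g q^\star=(h^\star(\g x_i))_{i=1}^N \in\H_S$, and the corresponding optimal $\g w^\star$ is, up to ties broken by argmin, the concatenation of the per-mode regressions on the $\g q^\star$-induced partition. Since the enumeration of Theorem~\ref{thm:multiclass} visits every element of $\H_S$, it in particular visits $\g q^\star$ and recovers the optimal cost. I expect no substantive obstacle beyond this bookkeeping; unlike the binary case there is no useful labeling symmetry to exploit (permuting mode indices among the $n$ submodels yields at most an $n!$-fold redundancy, which is a constant for fixed $n$ and therefore does not improve the stated bound).
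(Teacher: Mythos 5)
Your proposal is correct and follows exactly the route the paper intends: the paper leaves the corollary's proof implicit (``Theorem~\ref{thm:multiclass} implies the following''), but the intended argument is precisely your combination of Proposition~\ref{prop:pwals} with the enumeration of Theorem~\ref{thm:multiclass}, mirroring the proof of Proposition~\ref{prop:exact}. Your remark that mode-permutation symmetry yields only a constant $n!$-fold redundancy is a correct aside that does not change the bound.
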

%\begin{proof}
%Algorithm~\ref{alg:multiclass} can be modified to apply to PWA regression simply by computing the %least squares estimates of the $\g w_j$ as in~\eqref{eq:wj} and the 
%cost function~\eqref{eq:pwals} at each iteration where a new classification is obtained, in a similar manner as Algorithm~\ref{alg:exact} was built from Algorithm~\ref{alg:enum}. By Theorem~\ref{thm:multiclass}, the resulting algorithm is then guaranteed to test all possible classifications and thus all possible values for the cost function~\eqref{eq:pwals}. Hence, it finds the global minimum of~\eqref{eq:pwals} and, by Proposition~\ref{prop:pwals}, a global solution to Problem~\ref{pb:PWA} 
%in $\left[2^{d+1}\begin{pmatrix}N\\d\end{pmatrix}\right]^{n_H} = \O\left(N^{dn_H}\right)$ iterations with $n_H=n(n-1)/2$. The statement follows since all operations involved in each iteration can be computed in polynomial time.
%%Since all operations involved at each iteration can be computed in polynomial time, the algorithm has a complexity in the order of $(2^{d+1} N^d)^{n_H}/(d!)^{n_H}$ with $n_H=n(n-1)/2$ and hence polynomial in $N$ and exponential in $d$ and $n$.
%\end{proof}

%%%%%%%%%%%%%%%%%%%%%%%%%%%%%%%%%%%%%%%%%%%%%%
\section{Conclusions}
\label{sec:discuss} 
%%%%%%%%%%%%%%%%%%%%%%%%%%%%%%%%%%%%%%%%%%%%%%

The paper discussed complexity issues for PWA regression and showed that i) the global minimization of the error is NP-hard in general, and ii) for fixed number of modes and data dimension, an exact solution can be obtained in time polynomial in the number of data. %, e.g., with the squared loss function.
The proof of NP-hardness also implies that the problem remains NP-hard even when the number of modes is fixed to~$2$, which indicates that the complexity is mostly due to the data dimension. 
An open issue concerns the conditions under which a PWA system generates trajectories satisfying the general position assumption used by the polynomial-time algorithm.  
Future work will also focus on the extension of the results to the case of arbitrarily switched systems and heuristics inspired by the polynomial-time algorithm, whose practical application remains limited by an exponential complexity in the dimension.
%Finally, despite the limited practical use of the algorithms, the proposed approach might pave the way for efficient heuristics based on classification considerations.

\section*{Acknowledgements}

%The author thanks the reviewers for their valuable comments and Yann Guermeur for carefully reading this manuscript.
The author would like to thank the anonymous reviewers for their comments and suggestions. Thanks are also due to Yann Guermeur for carefully reading this manuscript.

%\section*{References}

%\bibliographystyle{elsarticle-num}
%\bibliography{/home/lauer/Documents/RECHERCHE/PiecewiseRegression/poly/pwa} 

\end{document}